\documentclass{amflinkedin}

\usepackage[utf8]{inputenc}
\usepackage{amsmath,amssymb,amsthm,mathtools}
\newtheorem{proposition}{Proposition}[section]
\usepackage{array}
\usepackage{float}
\floatstyle{ruled}
\newfloat{algorithm}{tbp}{loa}
\floatname{algorithm}{Algorithm}
\floatstyle{plain}
\usepackage{url}

\definecolor{posblue}{HTML}{315B75}
\definecolor{semorange}{HTML}{315B75}
\definecolor{repairgreen}{HTML}{7B8874}
\definecolor{invalidred}{HTML}{9A766B}
\definecolor{mutedgray}{HTML}{7A858C}
\newcolumntype{R}[1]{>{\raggedright\arraybackslash}p{#1}}
\newcommand{\minimal}{\textsc{Shift}}
\newcommand{\exactrepair}{\textsc{Replay}}
\newcommand{\sur}{\textsc{SUR}}
\newcommand{\dirty}{\textsc{Dirty}}
\newcommand{\full}{\textsc{Re-prefill}}

\title{\parbox[t]{\dimexpr\linewidth-2.8cm\relax}{\raggedright Budgeted KV-Cache Repair\\for Evolving Contexts}}
\author[\clubsuit]{Han Yu}
\author[\clubsuit]{Wenhui Zhu}
\author[\clubsuit]{Jincheng Cao}
\author[\clubsuit]{Rui Cai}
\author[\clubsuit]{Yuanda Xu}
\author[\clubsuit]{Zhengze Zhou}
\author[\diamond]{Xiwen Chen}
\author[\ddagger]{Hao Wang}
\author[\ddagger]{Xuanzhao Dong}
\author[\triangle]{Han Shi}
\author[\clubsuit]{Menglin Zhou}
\author[\clubsuit]{Shayan Mohajer Hamidi}
\author[\clubsuit]{Lin Su}
\author[\clubsuit]{Mina Razghandi}
\author[\clubsuit]{Sayan Ghosh}
\author[\clubsuit]{Qing Lan}
\author[\clubsuit]{Rohit Jain}
\author[\clubsuit]{Alborz Geramifard}

\affiliation[\clubsuit]{LinkedIn}
\affiliation[\diamond]{Clemson University}
\affiliation[\ddagger]{Arizona State University}
\affiliation[\triangle]{Rice University}
\correspondence{Han Yu (\email{hanyu@linkedin.com})}

\hypersetup{pdfauthor={Han Yu, Wenhui Zhu, Jincheng Cao, Rui Cai, Yuanda Xu, Zhengze Zhou, Xiwen Chen, Hao Wang, Xuanzhao Dong, Han Shi, Menglin Zhou, Shayan Mohajer Hamidi, Lin Su, Mina Razghandi, Sayan Ghosh, Qing Lan, Rohit Jain, Alborz Geramifard}}

\hypersetup{
  pdftitle={Budgeted KV-Cache Repair for Evolving Contexts},
  pdfsubject={Budgeted KV-cache repair and selective recomputation},
  pdfkeywords={KV cache, context editing, selective recomputation, language models}
}

\let\aftabstract\abstract
\RenewDocumentEnvironment{abstract}{+b}{\gdef\arxivabstracttext{#1}}{}
\begin{abstract}
Language-model agents revise their working context as information becomes obsolete, but deleting text does not erase its influence on cached computation. Later key--value (KV) states remain conditioned on the deleted content, while recomputing the entire retained suffix can be costly. We introduce budgeted KV-cache repair: recovering task performance by recomputing a user-chosen fraction of the retained suffix. Comparing repair allocations under identical relevance scores and token budgets, we find that refreshing coherent regions can substantially outperform refreshing isolated tokens. This finding motivates Semantic-Unit Refresh (\sur{}), which allocates the budget to query-relevant text regions using inexpensive embedding scores, without training or an auxiliary transformer forward. To execute these repairs efficiently, a layer-packed executor processes selected states together at each layer while preserving full causal cache access. Across five models, \sur{} achieves strong recovery in controlled deletion tasks while recomputing only 2--5\% of the suffix. On four natural multi-hop QA datasets, it achieves near-full-recompute exact match at a 5\% budget with lower post-edit latency. The executor accelerates fragmented refresh by up to 16.1$\times$ over interval-wise execution.
\end{abstract}

\aftabstract{\arxivabstracttext}

\begin{document}
\maketitle
\section{Introduction}
\label{sec:introduction}

Language-model agents build their working context from tool outputs, retrieved passages, and intermediate decisions \citep{yao2023react,packer2023memgpt}. As a task progresses, some of this information becomes obsolete: an observation is superseded, a passage is replaced, or an abandoned branch of reasoning is discarded. Removing such content keeps the context relevant to subsequent decisions, but creates a problem for cache reuse. A key--value (KV) cache stores representations computed from the earlier context, not independent encodings of its tokens. Deleting a passage therefore is not guaranteed to remove its influence from the states that follow it \citep{li2026kveraser,li2026notes}.

This residual influence makes a small text edit potentially expensive. The prefix before the deletion remains valid, whereas the retained suffix was computed while the deleted content was still visible. Correcting the suffix's positions repairs its coordinates, but not its computational history \citep{he2025pie,ma2026leyline}. Replaying the entire suffix from the valid prefix repairs both, yet makes the cost depend on how much text follows the edit rather than how much was removed. An agent may thus have to revisit a long history after discarding only a short record. The question is whether answering the next query requires all of that recomputation.

Recent work on cache editing and selective reuse suggests that full reconstruction is not always necessary. Learned editors compensate for erased context, while CacheBlend and ProphetKV select states for recomputation using cache deviation or query attention \citep{li2026kveraser,yao2025cacheblend,wang2026prophetkv}. This raises a practical question for edited contexts: given a fixed amount of recomputation, where should it be spent to recover the next answer? We study this question as \emph{budgeted KV-cache repair}, letting the user choose the fraction of the retained suffix to refresh. The objective is to recover answer quality within that allowance, rather than reconstruct every state of a clean cache.

We investigate this allocation problem empirically, comparing relevance signals and ways of grouping the positions to refresh. A central finding is that relevance alone does not determine repair quality: with identical relevance scores and token budgets, refreshing coherent regions can recover substantially more answers than refreshing isolated high-scoring tokens. The useful unit of repair can thus extend beyond the tokens that make a passage appear relevant.

This finding motivates Semantic-Unit Refresh (\sur{}), which distributes the repair budget across query-relevant regions (Figure~\ref{fig:repair-locality-sur-pipeline}). Frozen input embeddings provide inexpensive relevance scores, while text boundaries define regions to refresh together. The resulting policy can cover separated evidence without training or an auxiliary transformer forward.

\begin{figure*}[t]
  \centering
  \includegraphics[width=0.95\textwidth]{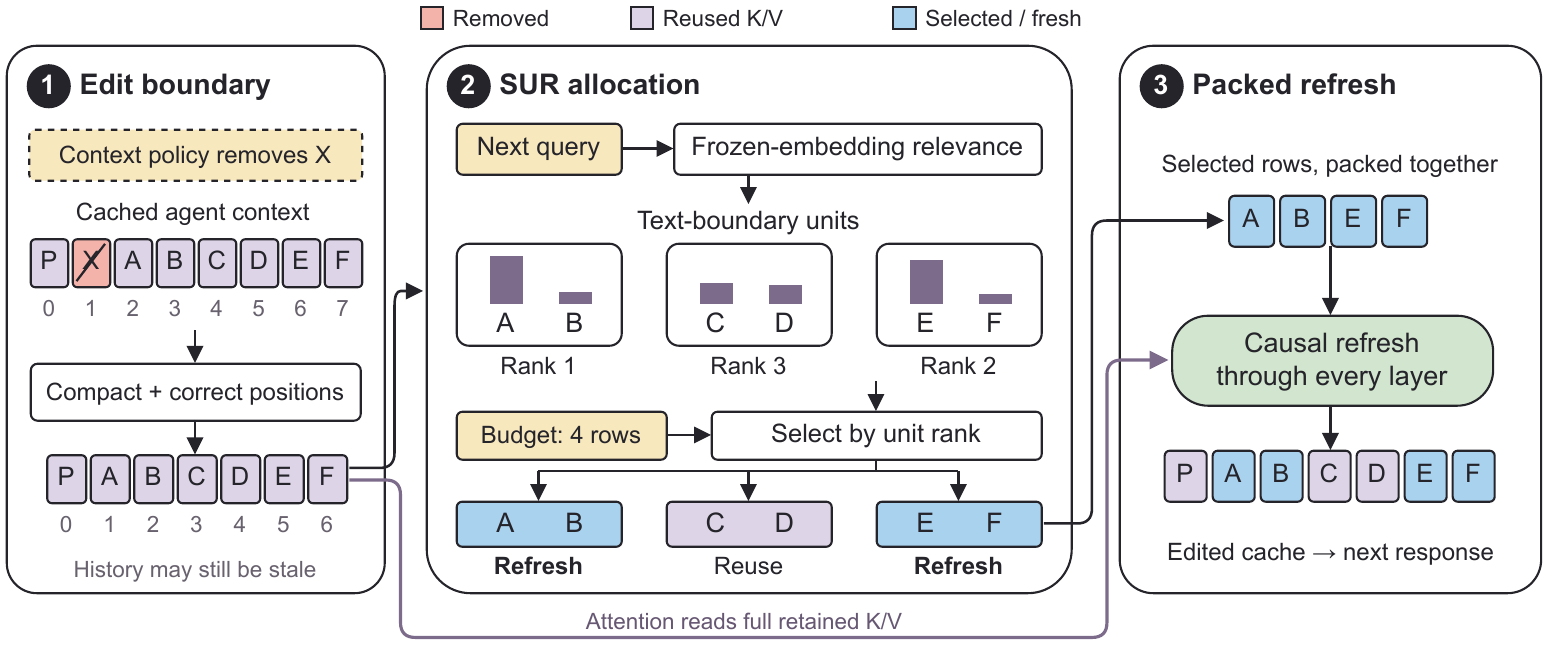}
  \caption{\textbf{Budgeted repair at an agent's context-edit boundary.} After deletion and position correction, \sur{} scores retained tokens against the next query using frozen embeddings, ranks text-boundary units by aggregated relevance, and selects regions under a user budget. Lower-scoring neighbors can be refreshed with their selected unit. Packed refresh updates selected states while retaining full causal cache access. Relevance bars and the four-row allowance are schematic.}
  \label{fig:repair-locality-sur-pipeline}
\end{figure*}

Region allocation also creates an execution challenge: refreshing separated regions one at a time repeatedly traverses the model. We develop a layer-packed executor that processes the selected regions together, reducing fragmented-refresh overhead while preserving access to the full causal cache.

Our contributions are:
\begin{itemize}
  \item \textbf{Grouping changes what the same repair budget can recover.} Holding token scores and budgets fixed, region allocation improves exact match by over 80 percentage points over individual token selection on Qwen3-8B and Llama-3.3-70B at a 2\% budget. These controls identify support organization as a design choice distinct from relevance scoring.
  \item \textbf{Low-cost region selection enables small repairs.} Across five models, \sur{} reaches 89.6--100\% exact match with only 2\% suffix recomputation in controlled deletion tasks. On four natural multi-hop QA datasets, a 5\% budget yields near-full-recompute exact match with lower post-edit latency.
  \item \textbf{Dispersed repairs share one model traversal.} Our packed executor accelerates fragmented refresh by up to 16.1$\times$ over interval-wise execution, preserving full causal cache access while amortizing traversal overhead across regions.
\end{itemize}

\section{Related work}
\label{sec:related}

\paragraph{Context management and cache editing.}
Context-management methods determine what information a model retains. MemGPT organizes memory across storage tiers, while MEMENTO trains models to compress reasoning blocks into summaries \citep{packer2023memgpt,kontonis2026memento}. At the cache level, H$_2$O and SnapKV retain attention-important states to reduce memory and decoding costs \citep{zhang2023h2o,li2024snapkv}. Retaining useful information, however, differs from updating computation after that information changes: a surviving state can still encode a removed passage. Our interface takes the deletion as given and budgets fresh computation over the retained suffix; unselected states remain available to attention.

Cache-editing work addresses this state-maintenance problem more directly. PIE corrects rotary positions after code edits, and Leyline exposes policy-directed splicing alongside a re-prefill mode for semantic forgetting \citep{he2025pie,ma2026leyline}. Beyond positional correction, KVEraser learns replacement steering states for the erased interval \citep{li2026kveraser}, while Models Take Notes identifies downstream representations of earlier conclusions and studies selective recomputation and append-only correction \citep{li2026notes}. These approaches establish that an edit's effects extend beyond its own cache entries. We investigate how to distribute a user-chosen recomputation budget after physical span removal, recovering task quality through selective recomputation without a learned editor or corrective text.

\paragraph{Selective recomputation and budget allocation.}
Modular cache reuse avoids rebuilding recurring context: Prompt Cache provides reusable prompt modules, while subsequent systems reconstruct cross-chunk interactions selectively \citep{gim2024promptcache}. CacheBlend uses KV deviation, Cache-Craft uses chunk-dependence metadata, and EPIC's LegoLink recomputes chunk-initial tokens \citep{yao2025cacheblend,agarwal2025cachecraft,hu2025epic}. CacheTune derives salience from frequency-domain KV structure; SpecCache combines auxiliary-model features with score-based span completion \citep{li2026cachetune,wen2026speccache}. These mechanisms address contextual mismatch when cached pieces are assembled. Deletion creates a related mismatch within an already contextualized suffix. We compare adapted selection signals under a shared deletion-repair operator and equal suffix budgets, separating allocation decisions from the original systems' serving implementations (Appendix~\ref{app:protocols}).

Query-aware reconstruction further directs computation toward the next answer. A\textsuperscript{3} uses shallow query attention, ProphetKV aggregates query attention across layers, and InfoFlow KV aligns prompt-conditioned selection with inference-time positional geometry \citep{zhou2025a3,wang2026prophetkv,teng2026infoflow}. QCFuse reduces probing overhead through compact chunk anchors and critical-layer profiling \citep{yan2026qcfuse}. For budgeted deletion repair, the question is not only which tokens score highly, but which surrounding computation should be refreshed with them. Our matched-score experiments isolate this allocation choice: coherent regions can recover answers that isolated high-scoring tokens miss at the same budget. \sur{} turns this finding into a training-free policy: frozen embeddings provide inexpensive relevance, and text boundaries organize the budget into coherent refresh regions.

\section{Budgeted cache repair}
\label{sec:background}
\label{sec:framework}

Budgeted repair separates how much to recompute from where to spend that computation. Given a context deletion, the next query, and a user-specified budget, we first correct cache positions, use \sur{} to select coherent regions, and refresh their states while reusing the rest of the cache (Figure~\ref{fig:repair-locality-sur-pipeline}). Packed execution batches the selected regions to avoid repeated model traversals.

\subsection{The budgeted repair problem}
\label{sec:common-repair-operator}
\label{sec:budget-contract}

We consider a cached token sequence comprising a retained prefix $p$, a span $e$ to delete, and a retained suffix $s$. With $\oplus$ denoting concatenation, the original context is $x=p\oplus e\oplus s$ and the edited context is $x^-=p\oplus s$. Let $q$ denote the next query and $S=|s|$ the number of retained suffix tokens.

\begin{figure*}[t]
\centering
\includegraphics[width=0.96\textwidth]{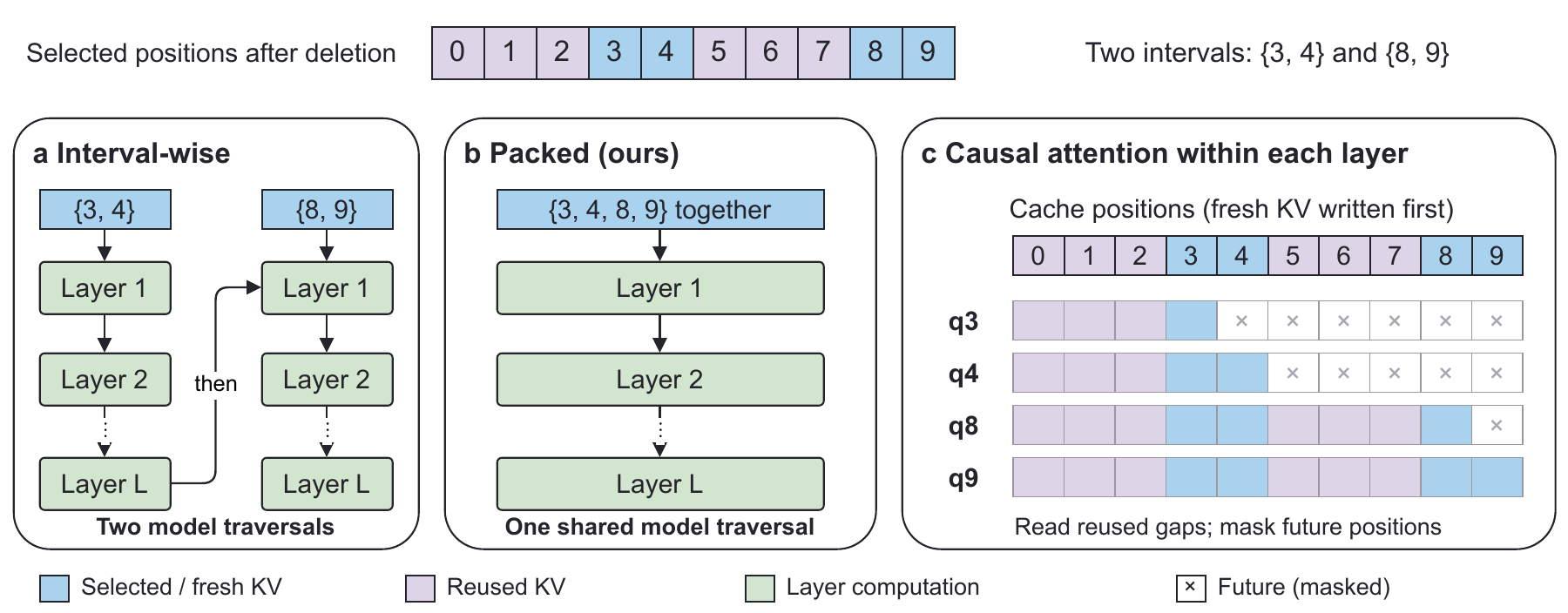}
\caption{\textbf{Same selected tokens, a different execution order.} (a) Interval-wise refresh completes every layer for positions $\{3,4\}$ before traversing the model again for $\{8,9\}$. (b) Packed refresh processes all four positions together at each layer, sharing one model traversal. (c) After selected K/V are updated, attention still reads all causally available entries, including reused gaps. Filled cells indicate visible K/V, not attention weights; crosses mark future positions.}
\label{fig:packed-execution}
\end{figure*}

The edit leaves the prefix cache valid because those states cannot depend on later tokens. The suffix is different: its states were computed while the removed span was still available to attention. Removing $e$ also moves the suffix to earlier positions. We therefore delete the corresponding cache entries and correct the suffix's rotary positions \citep{su2021roformer,he2025pie}, but neither operation recomputes the suffix representations. Recomputing $s$ under the edited history would rebuild them, at a cost that grows with the entire retained suffix.

Our objective is to recover answer quality for $q$, rather than reproduce every state of a newly built cache. The user supplies a recomputation budget, expressed as a percentage $b$ of the $S$ suffix tokens. This gives an allowance of $B$ positions; the selection rule chooses the set $A$ of $B$ positions to refresh. No retained text is discarded: the budget changes which states receive new computation, not which tokens attention can access. Appendix~\ref{app:operator-semantics} specifies the integer rounding.

Given $A$, we recompute the selected tokens through every transformer layer and replace their cached keys and values. All other entries are reused and remain available under causal attention. Sparse repair is approximate because selected tokens can still read earlier unrefreshed states computed with $e$ present (Appendix~\ref{app:mixed_cache_error}). A zero budget leaves only deletion and position correction; a full budget recomputes the entire suffix. Between these endpoints, the user can trade recomputation for answer recovery while retaining the same context.

With the refresh operation fixed, the design problem is to choose $A$ from the retained context and the next query. Appendix~\ref{app:task-aware-allocation} formalizes this objective. We next develop a selection rule that allocates the budget to query-relevant regions.

\subsection{Semantic-Unit Refresh}
\label{sec:allocator-families}

A relevance score identifies promising locations, but does not determine which surrounding tokens should be refreshed with them. Our matched-score experiments show that coherent regions can recover answers missed by isolated high-scoring tokens at the same budget (Section~\ref{sec:allocator-controls}). \sur{} turns this observation into a low-cost policy: text boundaries supply candidate regions, and query relevance determines which regions receive computation.

\paragraph{Scoring relevance without a model forward.}
The selection signal should be inexpensive relative to the computation it saves. We therefore use the model's frozen input embedding table $E$, without an auxiliary transformer forward or a trained selector. For suffix token $s_i$ and query token $q_j$, the score is the strongest embedding match to the query, with a bonus for an identical token:
\begin{equation}
 \begin{aligned}
 r_i={}&\max_{1\leq j\leq |q|}
 \left\langle \frac{E(s_i)}{\|E(s_i)\|_2},
                 \frac{E(q_j)}{\|E(q_j)\|_2}\right\rangle\\
 &+0.25\,\mathbf{1}[s_i\in q].
 \end{aligned}
 \label{eq:sur-relevance}
\end{equation}
Here $\mathbf{1}[s_i\in q]$ indicates whether the suffix token occurs in the query. These scores supply the relevance signal; the next step determines how to distribute the budget around it.

\paragraph{Allocating the budget to regions.}
We group tokens at message, structured-container, paragraph, and sentence boundaries, splitting long units and merging short ones within their container. Each unit receives the mean of its four highest token scores, or all scores if shorter. A few relevant tokens can thus identify a unit for refresh, bringing its lower-scoring neighbors into the same computation. We rank units by this score, with earlier positions breaking ties.

To fill the allowance $B$, we admit whole units in ranked order. If the next unit does not fit, we select a contiguous window within it that uses exactly the remaining allowance. We choose the window with the largest summed forward-decaying relevance envelope, which extends high relevance toward following tokens. Appendix~\ref{app:sur-spec} specifies this scoring rule and the segmentation details. The resulting set $A$ contains exactly $B$ positions and may cover several separated regions. We sort these positions, merge adjacent ones into execution intervals, and refresh the same selection at every layer (Algorithm~\ref{alg:sur}).

\subsection{Packed execution}
\label{sec:packed-execution}

Packed execution avoids traversing the model separately for each selected region. The regions chosen by \sur{} can form $R$ disjoint intervals. An interval-major schedule processes the first interval through every layer, then repeats the traversal for the next interval, incurring dispatch and cache-management work $R$ times. A small token budget can therefore still require many model traversals when the selection is fragmented.

We instead use a layer-major schedule: all selected positions advance through one layer together before moving to the next (Figure~\ref{fig:packed-execution}). We gather their token embeddings into a compact batch and keep their hidden states together throughout the traversal. At each layer, we compute fresh K/V for these positions, write them into the retained cache, and evaluate attention and the remaining layer operations for the selected tokens. This produces the packed hidden states for the next layer and, after the final layer, a cache refreshed at exactly the selected positions.

Only the recomputation is packed; the context available to attention is not. Each selected token keeps its position in the edited context and attends to the full causal history: fresh K/V at selected positions and reused K/V at unselected positions, including gaps between selected regions. Its position determines which entries are visible, so future tokens remain masked. Packing thus changes the order of computation without restricting attention to the packed tokens or changing the selected set. Appendix~\ref{app:kernel-execution} gives the implementation and the equivalence argument.

\label{sec:cost-model}
Selection and packing therefore reduce different costs. Across $L$ transformer layers, selecting $B$ of $S$ suffix positions reduces token-layer updates from $LS$ to $LB$: a 2--5\% budget removes approximately 95--98\% of suffix-position recomputation. Packing executes that same selection in one model traversal rather than $R$, batching the selected computation and amortizing traversal overhead across regions. Appendix~\ref{app:recompute-scaling} develops the arithmetic and overhead models.

\section{Experiments}
\label{sec:results}

We evaluate three aspects of budgeted repair. Controlled deletion tests answer recovery and the effect of support organization under matched relevance scores and budgets. Natural multi-hop QA tests whether \sur{} offers a useful quality--latency tradeoff on real questions. A matched-work execution benchmark isolates how packing reduces fragmented-refresh overhead. Section~\ref{sec:discussion} connects these results to allocation and budget choice.

\begin{figure*}[t]
\centering
\includegraphics[width=0.95\textwidth]{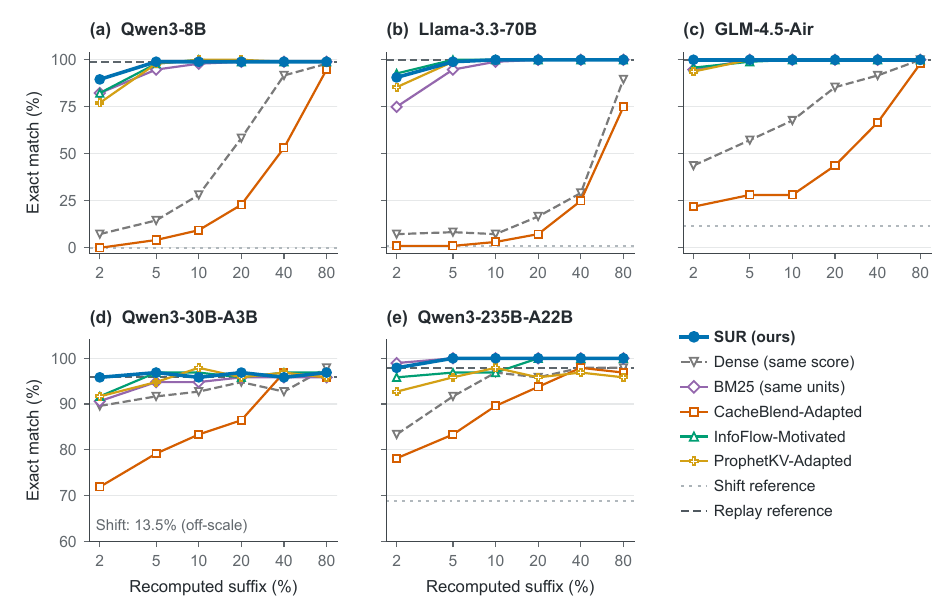}
\caption{\textbf{Budgeted repair across five model systems.} Every panel uses the same 96 cases and baseline set, including already-correct cases. Lines connect measured suffix budgets; Shift and Replay are reference operators. Prior-derived selectors share the same repair operator (full results in Appendix~\ref{app:budget-frontiers}). The bottom row uses a magnified vertical range; Qwen3-30B's Shift score is annotated off-scale.}
\label{fig:deletion-matrix-response}
\end{figure*}

\begin{figure*}[t]
\centering
\includegraphics[width=0.95\textwidth]
{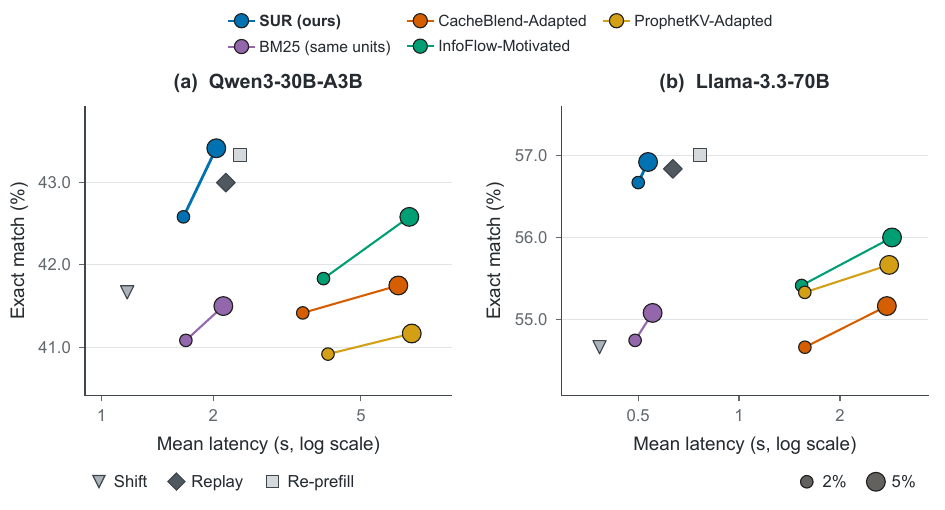}
\caption{\textbf{Answer quality versus repair-and-generation latency.} Exact match across four datasets (1,200 cases from 400 questions) versus mean post-edit latency, including selection and decoding. Small and large markers denote 2\% and 5\% recomputation. All methods use the same sequential executor. Paired quality comparisons appear in Table~\ref{tab:naturalqa-paired-ci}.}
\label{fig:natural-qa-main}
\end{figure*}

\subsection{Experimental setup and baselines}
\label{sec:experimental-protocol}

\paragraph{Models and common protocol.}
Our evaluation covers Qwen3-8B, Qwen3-30B-A3B, Llama-3.3-70B, GLM-4.5-Air, and Qwen3-235B-A22B \citep{yang2025qwen3,dubey2024llama3,glm45team2025}. We use BF16 weights, greedy decoding, and each model's native tokenizer and chat template. For a given case and budget, every allocator selects exactly the same number of suffix positions for all-layer refresh. We measure task quality and post-edit answer latency with the shared sequential interval executor; Section~\ref{sec:execution-results} evaluates packed refresh timing.

\paragraph{Reference operators.}
Four paths anchor the comparison. \dirty{} keeps the deleted span and is an untreated reference. \minimal{} removes it and corrects suffix positions without contextual recomputation. \exactrepair{} retains the valid prefix and recomputes the entire suffix; \full{} rebuilds the complete edited prompt from scratch. Shift and Replay provide the zero- and full-suffix-budget endpoints.

\paragraph{Allocation controls.}
Four controls distinguish scoring from grouping. \emph{Dense tokens} selects the highest-scoring individual tokens using \sur{}'s relevance scores. \emph{Max-relevance contiguous} uses the same scores but selects the single highest-sum window that fills the budget. \emph{BM25 (same units)} substitutes lexical relevance for embedding relevance while retaining \sur{}'s units and allocator. \emph{Random semantic units} selects those units in random order, testing whether grouping alone suffices without query direction.

\paragraph{Prior-work-derived allocators.}
We compare eight further policies by transferring their allocation signals into the common deletion-and-refresh operator. \emph{ProphetKV-Adapted} ranks tokens by mean query attention, whereas \emph{InfoFlow-Motivated} uses the attention tensor's $L_2$ norm \citep{wang2026prophetkv,teng2026infoflow}. \emph{Cache-Craft-Adapted} measures retained tokens' attention to deleted keys; \emph{A\textsuperscript{3}-Adapted} uses query attention after a shallow edited-context prefill \citep{agarwal2025cachecraft,zhou2025a3}. \emph{CacheBlend-Adapted} instead ranks shallow KV deviations from the position-corrected cache \citep{yao2025cacheblend}.

Other policies use chunk geometry or representation magnitudes. \emph{LegoLink-Adapted} prioritizes positions near chunk boundaries, \emph{CacheTune-Adapted} scores low-frequency reconstructed KV magnitudes, and \emph{SpecCache-Adapted} uses auxiliary-model hidden-state norms with span selection \citep{hu2025epic,li2026cachetune,wen2026speccache}. A separate \emph{clean-state oracle} ranks deviations from a fully recomputed cache; it requires full recomputation to obtain its signal. Together with \sur{} and the four controls, these give fourteen allocators. Their comparison concerns allocation under a shared repair operator, not the original systems' serving throughput. Appendix~\ref{app:deletion-matrix-protocol} specifies the adaptations and selection rules.

\subsection{Controlled answer recovery on Erasing-NIAH}
\label{sec:primary-response}

We first ask how much of the retained suffix must be refreshed when deletion changes the correct answer. Erasing-NIAH \citep{li2026kveraser} embeds two same-key facts in a long context; deleting one changes the required ordered list of values. This construction makes the effect of deletion directly checkable, allowing us to compare allocation rules over a broad budget range.

All five models evaluate the same 96 cases spanning 1K--32K contexts. We sweep all fourteen allocators at 2/5/10/20/40/80\% suffix budgets, alongside the four reference operators. Exact match (EM) requires the predicted list to match the edited-context answer in both order and multiplicity. Every score includes all 96 cases, including those already correct under Shift. Appendix~\ref{app:common96-core-protocol} gives the construction and answer-scoring details.

\paragraph{Recovery with small budgets.}
Figure~\ref{fig:deletion-matrix-response} shows that high answer accuracy can be recovered by refreshing only a small fraction of the stale suffix. At 2\% recomputation, \sur{} reaches 89.6--100\% EM across the five systems. Even the largest model, Qwen3-235B, benefits substantially over Shift despite its stronger starting point: model scale alone does not eliminate repair demand.

Increasing the budget brings several models to their dense-reference performance without refreshing the entire suffix. This does not imply a universal saturation point: Qwen3-30B stays near Replay but below Re-prefill across the sweep. The budget offers a tunable tradeoff, with the useful operating point depending on the model.

\subsection{Quality and latency on natural multi-hop QA}
\label{sec:natural-breadth}

The controlled task establishes answer recovery; we next ask whether budgeted repair can preserve accuracy on real questions while reducing the cost of answering them. We use 2WikiMultiHopQA, HotpotQA, IIRC, and MuSiQue to test this tradeoff beyond the same-key numeric construction \citep{ho2020twowiki,yang2018hotpotqa,ferguson2020iirc,trivedi2022musique}. We select 100 source questions per dataset before observing model outcomes. For each source, inserting the same non-supporting passage before, between, or after supporting evidence gives three placement cases. Deleting that passage restores the clean document and its target answer, but leaves suffix states computed with the passage present. Each model has 1,200 cases, with 300 per dataset.

We compare a mixture-of-experts model, Qwen3-30B-A3B, and a dense model, Llama-3.3-70B. At 2\% and 5\% budgets, \sur{} is evaluated against same-unit BM25 and the CacheBlend-, InfoFlow-, and ProphetKV-derived selectors, with Shift, Replay, and Re-prefill as references. This comparison includes both inexpensive lexical selection and forward-based selection signals, whose acquisition cost matters when the repair budget is small.

\paragraph{Quality and timing.}
We report normalized answer EM over all cases and mean post-edit latency, including selection, position correction, refresh, and query/answer decoding. Refresh includes prefix copying and cache assembly; initial dirty-context prefill and harness-only clones are excluded, while Re-prefill's edited-context rebuild is included. Methods share the same worker and harness for each case, with one timed trial per case and no warmup. Quality differences use source-paired 95\% bootstrap intervals, keeping each question's three placements together. Appendices~\ref{app:natural-detail}--\ref{app:naturalqa-paired-ci} give hardware settings, per-dataset results, and the statistical protocol.

\paragraph{The quality--latency tradeoff.}
Figure~\ref{fig:natural-qa-main} shows that small-budget repair can preserve answering accuracy while reducing post-edit latency on real questions. At a 5\% budget, \sur{} is within 0.1 percentage points of Re-prefill's observed EM on both models and is 1.16--1.43$\times$ faster. It also has lower latency than Replay. The smaller budget trades some quality for further savings, most visibly on Qwen3-30B.

The comparison with other selectors highlights the importance of inexpensive allocation. At the larger budget, \sur{} has both higher EM and substantially lower latency than the attention/deviation-derived baselines on both models. Same-unit BM25 offers similarly low selection overhead; on Llama, it is marginally faster at the smaller budget but less accurate. Thus, acquiring a more expensive selection signal does not necessarily improve the quality--latency tradeoff.

Source-paired analysis supports contextual refresh over position correction alone: at 5\%, the improvements over Shift have positive confidence intervals on both models. The small differences from Re-prefill have intervals spanning zero (Table~\ref{tab:naturalqa-paired-ci}), consistent with the near-full-recompute observed EM.

\subsection{Execution efficiency of packed refresh}
\label{sec:execution-results}

Finally, we isolate fragmentation overhead at a fixed budget. On Qwen3-8B (BF16, batch one, H100 80GB), we fix $B=64$ and vary $R=1,4,16$ at 2K and 8K context lengths, matching causal query--key pair counts within each length. We compare packed scaled dot-product attention (SDPA) with interval-wise, direct-cache sequential, and existing one-shot sparse execution. These controls separate shared-traversal savings from cache-assembly and adapter overhead.

\paragraph{Matched-work refresh timing.}
Refresh time includes all-layer recomputation, cloning, planning, and mask construction, excluding selection, position correction, and answer generation. We report medians over 20 interleaved trials after ten warmups. This repeated benchmark measures the recomputation stage rather than the complete-answer latency reported for Natural QA.

Packing keeps refresh time nearly flat as the same recomputation budget becomes more fragmented, yielding up to 16.1$\times$ faster refresh than interval-wise execution. Most of this gain comes from sharing model traversal: the existing one-shot path already captures much of the benefit, with the packed adapter adding a further 1.21$\times$ at 8K with sixteen intervals. For a single interval, direct-cache sequential execution is slightly faster than packing. The useful execution strategy therefore depends on selection geometry, not just token count.

Full refresh timings and backend controls appear in Appendix~\ref{app:kernel-execution}, with supplementary recovery timings on actual QA-selected regions in Appendix~\ref{app:qa-recovery-timing}.

\section{Discussion}
\label{sec:discussion}
\label{sec:allocator-controls}

The Erasing-NIAH allocation controls identify support organization as a design choice distinct from relevance scoring. With identical token scores, budgets, and repair operators, Qwen3-8B's EM at 2\% rises from 7.3\% for individual tokens to 75.0\% for the best contiguous window and 89.6\% for \sur{}; Llama exhibits the same progression. The window already maximizes summed relevance over all windows of that width, yet distributing computation across relevant regions improves recovery further. Random unit selection performs poorly, while same-unit BM25 and InfoFlow-Motivated win at some operating points (Appendix~\ref{app:design-analysis}). The resulting principle is query-directed region allocation. \sur{} implements it with inexpensive text boundaries; whether linguistic boundaries improve on generic multi-window grouping remains open.

This dependence on allocation changes how the budget should be interpreted. On Qwen3-8B, \sur{} reaches near-perfect EM at 5\%, while same-score token selection approaches that level at 80\% in the measured sweep. A budget therefore specifies how much computation is available, not how much accuracy it will recover. Natural QA complements this controlled finding by testing \sur{}'s practical operating points: a 5\% budget approaches Re-prefill's observed EM while reducing post-edit latency. The highlighted budgets are useful choices, not limits on the interface; the allowance can be increased as the task's quality requirements demand.

Turning reduced computation into faster refresh requires an executor suited to the selection. The matched-work results motivate direct execution for a single interval and layer packing for fragmented selections, avoiding repeated traversals of the model. Allocation and execution thus address two parts of the same objective: reducing the suffix computation needed for recovery and reducing the overhead of carrying it out.

\section{Conclusion}

Budgeted KV-cache repair makes recomputation after a context edit a task-directed allocation problem. Our controlled experiments show that allocating the same token budget to relevant regions can recover answers missed by isolated-token refresh. \sur{} translates this finding into a low-cost policy, with strong controlled recovery at 2--5\% budgets and a favorable quality--latency tradeoff on natural multi-hop QA. Layer-wise packing separately accelerates fragmented refresh by up to 16.1$\times$ over interval-wise execution. Together, these results make region allocation and execution geometry central considerations in maintaining reusable context caches.

\label{main-content-end}

\clearpage
\section*{Limitations}
\label{sec:limitations}

\paragraph{Execution coverage.}
The execution study uses one Qwen3-8B checkpoint on one H100, with fixed-budget fragmentation sweeps at two context lengths and four real-QA timing inputs. These stage-specific measurements cannot establish a deployment-wide speedup. Layer packing changes low-precision execution paths: tested next-token argmax agreement is not a substitute for rerunning task-level evaluation. The optional indexed attention kernel passes the 8K full-model numerical control but exceeds its fixed error budget for one 2K value-state statistic. It remains experimental; packed SDPA is the default. Dense Llama/Qwen3 with full attention, fixed default rotary positions, batch size one, and a single device define the implemented packed scope; MoE, offloading, sliding windows, and distributed serving are not covered. Preparation overlap is measured on one text example, without generation.

\paragraph{Baseline adaptation.}
Prior-work-derived allocators share our repair operator and budget, which isolates allocation choices but does not reproduce each original system's optimized implementation or full policy. Conclusions about these adapted signals should not be read as native-system rankings.

\section*{Ethical Considerations}

The study concerns reuse of model computation after context editing, not deletion of information from model weights or a privacy guarantee. A repaired cache may retain deleted information in unrefreshed states. Applications requiring confidential-data removal must therefore use a separately validated security mechanism rather than infer erasure from answer accuracy. The experiments use existing research benchmarks; their source citations and evaluation transformations are documented in Appendix~\ref{app:protocols}.

\bibliographystyle{acl_natbib}
\bibliography{references}

\clearpage
\beginappendix
\newcolumntype{L}[1]{>{\raggedright\arraybackslash}p{#1}}

\section{Repair operators, allocation rules, and evaluation}
\label{app:protocols}

This section specifies the cache-maintenance interface, selector implementations, and task evaluators. Repair semantics and selected-token count are fixed across allocators; each rule determines \emph{which} retained positions receive computation.

\subsection{Operator semantics and exact budgets}
\label{app:operator-semantics}
\label{app:response-definitions}

Write the cached pre-query sequence as $x=p\oplus e\oplus s$, with deleted width $d=|e|$ and retained suffix length $S=|s|$.  The requested context is $x^-=p\oplus s$.  At layer $h$, positional correction maps the old suffix key at position $i$ to its compacted position:
\begin{equation}
 \begin{aligned}
 \widetilde K^h_{i-d}&=R^h_{i-d}(R^h_i)^{-1}K^h_i,\\
 \widetilde V^h_{i-d}&=V^h_i.
 \end{aligned}
 \label{eq:appendix-rope-shift}
\end{equation}
Here $R_i^h$ is the model's rotary map, including its position-scaling convention.  This operation corrects position without removing the deleted span's influence on the stored representations.

\paragraph{Four reference paths.}
\dirty{} queries the original cache, with the deleted span still present; it is an untreated reference, not a completed deletion.  \minimal{} physically removes the span and position-corrects the retained suffix, without contextual forwards.  \exactrepair{} retains the valid prefix and recomputes the entire suffix under the edited history.  \full{} rebuilds the complete edited prompt from an empty cache.  Replay and Re-prefill are operational dense references; their floating-point execution paths need not produce identical outputs (Appendix~\ref{app:numerical-systems}).

\paragraph{Sparse refresh reads mixed fresh and stale history.}
Every sparse arm starts from the Shift cache.  Selected positions are recomputed through all transformer layers in causal order and written back; unselected positions retain their shifted KV states.  Consequently, a refreshed position can still read an earlier unselected stale state.  Selecting a position does not establish causal closure.  Consecutive selected positions form execution intervals.  In the task-quality evaluations, the sparse executor uses sequential maximal-interval replay: each maximal contiguous selected interval is forwarded in order against the evolving cache.  Any added bridging positions must count against the same allowance.

For an interval $[a,b)$, the executor copies the current cache prefix $[0,a)$ and forwards the edited tokens $x^-_{a:b}$ against that prefix. The resulting K/V rows replace only $[a,b)$ at every layer; all other rows are retained.  Future cached positions are therefore absent from the forward, while earlier refreshed intervals are available to later ones.  The cache length remains unchanged after each replacement.  This reference implementation uses ordinary model forwards and tensor concatenation, not a specialized sparse kernel; fragmentation incurs repeated prefix-copy and forward overhead.

\paragraph{The allowance is an exact count of unique suffix positions.}
For a requested percentage $b$, every allocator receives
\begin{equation}
 B(S,b)=\min\{S,\lceil Sb/100\rceil\},\qquad 0\leq b\leq100.
 \label{eq:exact-suffix-budget}
\end{equation}
Only pre-query suffix positions are selectable.  The query may supply a ranking signal but cannot consume the repair budget.  We evaluate sparse arms at $b\in\{2,5,10,20,40,80\}\%$; Shift and Replay supply the separate 0\% and 100\% endpoints.  Curves retain nonmonotone observations and do not interpolate an unmeasured success rate.  A percentage is contextual-forward coverage, not a FLOP fraction: sparse arms also rotate/read/write the suffix, and a rotation is not equivalent to an attention--MLP forward.

\subsection{Task-aware allocation and repair locality}
\label{app:task-aware-allocation}

We distinguish two questions: which positions give the best answer quality at a fixed budget, and how little recomputation suffices for a target quality. Fix an edited context $x^-=p\oplus s$ and a nonempty suffix of length $S$; write $[S]=\{1,\ldots,S\}$ for its local indices.

\paragraph{Task utility.}
Let $C_A$ denote the cache obtained by refreshing positions $A\subseteq[S]$, starting from \minimal{}. For the next query $q$, let $\hat y(C,q)$ be the answer decoded from cache $C$. Its task score is
\begin{equation}
    \mathcal{J}_q(C) = \operatorname{Eval}_q\!\left(\hat y(C,q)\right),
\label{eq:task_utility}
\end{equation}
where $\operatorname{Eval}_q$ is the benchmark evaluation rule for that example. In our experiments, this is per-example exact match.

\paragraph{Allocation at a fixed budget.}
With an allowance of exactly $B$ positions, the ideal selection maximizes answer quality:
\begin{equation}
A_B^\star(q) \in \arg\max_{\substack{A\subseteq[S]\\|A|=B}} \mathcal{J}_q(C_A).
\label{eq:ideal_repair_selection}
\end{equation}
This defines the best support in hindsight, not an optimization that \sur{} solves during inference. Scoring a candidate selection requires applying its repair, decoding an answer, and consulting the task evaluator. \sur{} instead selects positions using query-conditioned relevance, without access to their eventual task score.

\paragraph{Minimum sufficient repair.}
For a target quality $\tau$, repair locality is the smallest fraction of suffix positions whose refresh reaches that target:
\begin{equation}
\rho_{\tau}(q) = \frac{1}{S} \min_{A\subseteq[S]}
\left\{|A|: \mathcal{J}_q(C_A)\geq\tau\right\},
\label{eq:repair_locality}
\end{equation}
We set $\rho_\tau(q)=+\infty$ if no selection reaches the target. With exact-match evaluation, $\tau=1$ requires a correct answer. A small value means that little recomputation suffices for the task, without requiring every cache state to match full recomputation. The definition concerns the amount of repair, not whether selected positions are contiguous. Our experiments evaluate selected supports at fixed budgets rather than solving for this minimum.

\subsection{Semantic-Unit Refresh}
\label{app:sur-spec}

\sur{} uses query relevance to rank coherent regions of the edited context. For suffix token $s_i$ and query-token sequence $q$, the frozen embedding table $E$ gives
\begin{equation}
 \begin{aligned}
 r_i={}&\max_{1\leq j\leq |q|}\frac{\langle E(s_i),E(q_j)\rangle}
                         {\|E(s_i)\|_2\|E(q_j)\|_2}\\
      &+0.25\,\mathbf{1}[s_i\in q].
 \end{aligned}
\end{equation}
The query is tokenized without added special tokens or stop-word filtering. Embedding normalization and cosine scoring run in FP32, processing the prompt in 1,024-token blocks.  The selector uses neither a learned router nor an auxiliary generation. Message and structured-container boundaries are preserved; ordinary text is split at sentence and paragraph boundaries.  Token offsets map the resulting units onto the edited prompt: each token is assigned by the midpoint of its character span.  The implementation verifies that tokenization reproduces the edited token sequence and that the units partition the retained suffix without gaps or overlaps.  Units longer than 128 tokens are split into balanced chunks.  A unit shorter than 8 tokens merges with the shorter adjacent unit in its container, when available; an equal-length tie chooses the preceding unit.  Oversized merged units are split again.

Each unit receives the mean of its four largest token scores, or all scores if it contains fewer than four tokens.  Units are ranked by this score, with earlier position breaking ties, and admitted until $B(S,b)$ positions are filled.  If the next unit exceeds the remaining allowance $w$, SUR chooses one width-$w$ window within that unit.  Starting from $u_0=0$, the forward envelope is
\begin{equation}
 \begin{aligned}
 u_t&=\max\bigl\{e^{r_t-r_{\max}},e^{-1/16}u_{t-1}\bigr\},\\
 r_{\max}&=\max_{i\ \mathrm{in\ the\ unit}}r_i.
 \end{aligned}
\end{equation}
The selected window maximizes the sum of $u_t$, with the earliest maximizer breaking ties.  This fills the exact allowance while favoring local forward context; it is a deterministic allocation heuristic, not a dependency oracle.

\begin{algorithm}[t]
\caption{Semantic-Unit Refresh under a suffix budget}
\label{alg:sur}
\small
\textbf{Input:} cache $\mathcal C$ for $p\oplus e\oplus s$, edited tokens $p\oplus s$, query $q$, budget $b$.\par
\begin{tabular}{@{}r@{\quad}p{0.86\linewidth}@{}}
1 & $B\leftarrow\min\{|s|,\lceil |s|b/100\rceil\}$. \\
2 & If $B=|s|$: return full suffix Replay from the valid prefix. \\
3 & $\mathcal C\leftarrow\operatorname{Shift}(\mathcal C,e)$; if $B=0$: return $\mathcal C$. \\
4 & Compute token scores $r_i$ using Equation~\ref{eq:sur-relevance}. \\
5 & Partition suffix into units; rank by their top-four-mean scores. \\
6 & $A\leftarrow\varnothing$; visit units $u$ in ranked order until $|A|=B$: \\
7 & \quad $w\leftarrow B-|A|$; if $|u|\leq w$, add all positions of $u$ to $A$. \\
8 & \quad Otherwise add the best width-$w$ forward-envelope window in $u$. \\
9 & Sort $A$ by position and coalesce adjacent positions into intervals. \\
10 & Refresh selected rows through all layers (Section~\ref{sec:packed-execution}), updating $\mathcal C$. \\
11 & Return $\mathcal C$ for ordinary query processing and answer decoding. \\
\end{tabular}
\end{algorithm}

\subsection{Matched controls and prior-work-derived allocators}
\label{app:deletion-matrix-protocol}

The full NIAH comparison contains fourteen allocators and four references: $14\times6+4=88$ evaluated coordinates per model.  All allocators share Equation~\ref{eq:exact-suffix-budget} and the sparse-refresh operator above. They do not share interval counts, selector overhead, or native serving implementations.

\paragraph{Four controls test support allocation and query relevance.}
\emph{Dense tokens} selects the top $B$ individual positions using the same $r_i$ as SUR. \emph{Max-relevance contiguous} uses those same scores and selects the maximum summed $r_i$ over \emph{all} width-$B$ contiguous suffix windows, choosing the earliest tie. \emph{BM25 (same units)} tokenizes query and unit text into lowercase ASCII-alphanumeric terms and scores SUR's exact units with $k_1=1.2$ and $b_{\mathrm{BM25}}=0.75$; query-term multiplicity weights each term's contribution.  The unit score is broadcast to its tokens and passed through SUR's unit allocator. \emph{Random semantic units} permutes the same units with a case-fixed seed and, when necessary, samples an exact-width window in the last unit. Dense tokens, the single window, and SUR compare support organization with a fixed token score and allowance. Random units test query direction; BM25 tests an alternative relevance rule with the same unit construction and allocator.

\paragraph{Eight prior-work-derived signals use the common repair path.}
Table~\ref{tab:revision-prior-signals} states their actual implementations. \emph{Adapted} transfers a published allocation signal into the deletion setting; \emph{Motivated} indicates a materially different scoring projection.  These rows compare allocation under a shared operator, not the native systems' throughput.  Unless otherwise stated, a score is converted to the exact top-$B$ suffix support with earlier-position tie-breaking.

\begin{table*}[tp]
\caption{\textbf{Prior-work-derived allocation signals.} Layer indices start at zero; layer 1 is the first contextual KV layer.  All rows use the common suffix budget and all-layer refresh operator.}
\label{tab:revision-prior-signals}
\centering\small
\setlength{\tabcolsep}{4pt}
\begin{tabular}{@{}>{\raggedright\arraybackslash}p{0.235\textwidth}
                  >{\raggedright\arraybackslash}p{0.725\textwidth}@{}}
\toprule
Allocator & Implementation in this comparison \\
\midrule
ProphetKV-Adapted~\citep{wang2026prophetkv} & Query-only forward against the Shift cache; suffix score is attention averaged uniformly over query tokens, heads, and layers. \\
InfoFlow-Motivated~\citep{teng2026infoflow} & $L_2$ norm of the same query-to-suffix attention tensor over query tokens, heads, and all layers. This differs from the original attention-mass formula, selected-layer scope, and chunk-reordering procedure. \\
Cache-Craft-Adapted~\citep{agarwal2025cachecraft} & Replay the dirty suffix in 64-token chunks with deleted keys still visible.  Score each retained token by its attention mass on deleted keys, averaged over heads and layers. \\
CacheBlend-Adapted~\citep{yao2025cacheblend} & Fresh two-layer edited prefill; score layer-1 KV deviation from Shift by $\sqrt{\|\Delta K_{1,i}\|_2^2+\|\Delta V_{1,i}\|_2^2}$. One selected support is reused across layers. \\
A\textsuperscript{3}-Adapted~\citep{zhou2025a3} & Fresh two-layer edited prefill followed by a shallow query forward; sum layer-1 query-to-context attention over heads and query positions. \\
LegoLink-Adapted~\citep{hu2025epic} & Treat the valid prefix as the first chunk and split the suffix into 512-token chunks.  Rank positions by depth from their chunk boundary, then chunk index.  Depths beyond the native 32-token limit are extensions required by the common budget grid. \\
CacheTune-Adapted~\citep{li2026cachetune} & Invert compacted-position RoPE; low-pass each layer's suffix K/V in FP32, retaining the lowest $\lfloor0.5(\lfloor S/2\rfloor+1)\rfloor$ rFFT bins.  Score the reconstructed K/V by their mean unsquared $L_2$ norm; sum-normalize each layer's scores and average across layers. \\
SpecCache-Adapted~\citep{wen2026speccache} & A fixed Llama-3.2-1B-Instruct auxiliary model scores tokens by mean hidden-state $L_2$ norm over its last third of layers.  Apply smooth-span segmentation (threshold 0.4, maximum 16 tokens) and completion at 0.7 coverage; map source-text intervals to target tokens and deterministically truncate/fill to the exact budget. \\
\bottomrule
\end{tabular}
\end{table*}

\paragraph{A privileged reference tests localization, not deployability.}
The \emph{clean-state oracle} ranks positions by all-layer KV $L_2$ deviation between Shift and a fully recomputed edited cache.  It then applies ordinary sparse refresh.  It requires clean recomputation to obtain its signal and is neither an optimal-support oracle nor a task-accuracy upper bound. KVEraser instead learns a different state-editing operator; a compatible trained editor checkpoint is not included in this allocator comparison \citep{li2026kveraser}.

\subsection{Datasets and answer scoring}
\label{app:common96-core-protocol}

\paragraph{Controlled NIAH.}
All five systems evaluate the same 96 controlled same-key deletion cases: Qwen3-8B, Qwen3-30B-A3B, Llama-3.3-70B-Instruct, GLM-4.5-Air, and Qwen3-235B-A22B-Instruct-2507.  Models retain their native tokenizer and chat rendering.  Deletion changes the required ordered list of numeric values; success requires exact equality between emitted and gold lists, preserving order and multiplicity.  The full-answer parser extracts digits after explicit response/channel boundaries and removal of marked thinking blocks; ordinary newlines do not truncate an answer.  It does not deduplicate, accept a gold substring, or search for a favorable answer using the gold list. The success rate is $100k/96$ for $k$ correct answers, including cases already correct under Shift.  Lexical addressability in this construction makes it a controlled repair test, not a low-overlap reasoning benchmark.

\paragraph{Natural QA.}
The panel selects 100 sources from each of HotpotQA, 2WikiMultiHopQA, MuSiQue, and IIRC before observing model outcomes.  Each source supplies before, between, and after-support placements of the same inserted non-supporting span; deleting that span restores the same clean document.  Thus every model has 1,200 placement cases, 300 per dataset. The low-budget comparison reports Qwen3-30B-A3B and Llama-3.3-70B with five allocators at 2\% and 5\%, plus Shift, Replay, and Re-prefill.  Decoding is greedy with an EOS-aware 64-token limit and no reasoning mode for Qwen.  Placements remain paired by source; they are not independent questions.  Moving the insertion changes suffix length as well as evidence geometry, unlike the separate matched construction.

Natural-QA results use answer-only EM with dataset-specific normalization. HotpotQA uses normalized string EM.  2WikiMultiHopQA additionally maximizes over answer aliases and demonyms; MuSiQue uses the answer and its supplied aliases. IIRC uses the DROP evaluator's numeric-aware normalization and span matching. All scorers use the same extracted answers. We evaluate answer correctness rather than supporting-fact or joint scores. Pooled EM uses unrounded correct counts across the four datasets; rounding is applied only for presentation (Appendix~\ref{app:natural-detail}).

\paragraph{Reference implementation.}
The evaluation harness loads model weights in BF16 with automatic device placement and uses greedy decoding. The main NIAH sweep, including Dirty, Shift, Replay, and Re-prefill, uses a 32-token limit without early EOS termination. Natural QA uses the EOS-aware 64-token limit stated above. Cache repair uses the sequential executor and unfused positional correction. The model forwards retain each architecture's native rotary configuration; the common intervention is defined in token positions rather than a shared tokenizer vocabulary. These choices specify the evaluated reference implementation, not an optimized serving backend.

\section{Complete budget--quality results}
\label{app:budget-frontiers}

\subsection{Five-model NIAH allocation sweep}

Tables~\ref{tab:revision-niah-qwen3-8b}--\ref{tab:revision-niah-qwen3-235b} report success rates (\%) for every allocation rule at every measured budget, with the same 96-case denominator throughout. References occupy a separate column. Values are point estimates without paired uncertainty or repeated-run variance.

The complete sweep exposes both useful operating points and reversals. SUR reaches 98.96\% success on Qwen3-8B at 5\% recomputation, 100\% on Llama-3.3-70B at 10\%, and 100\% on GLM-4.5-Air at 2\%.  It is not uniformly best: on Qwen3-235B-A22B, BM25 obtains 98.96\% at 2\%, versus SUR's 97.92\%; on Llama-3.3-70B, InfoFlow-Motivated obtains 100\% at 5\%, versus SUR's 98.96\%. The full budget curves show saturation, nonmonotonicity, and occasional scores above the measured Replay endpoint.

\begin{table*}[tp]
\caption{\textbf{Qwen3-8B: complete NIAH allocation sweep.} Success rate (\%, $\uparrow$) on the same $N=96$ cases. Column headings give the recomputed suffix budget; references occupy their own column. The clean-state oracle uses privileged information.}
\label{tab:revision-niah-qwen3-8b}
\centering\small
\setlength{\tabcolsep}{3.2pt}
\begin{tabular}{@{}lrrrrrrr@{}}
\toprule
Method & Reference & 2\% & 5\% & 10\% & 20\% & 40\% & 80\% \\
\midrule
Dirty & 0.00 & -- & -- & -- & -- & -- & -- \\
Shift & 0.00 & -- & -- & -- & -- & -- & -- \\
Replay & 98.96 & -- & -- & -- & -- & -- & -- \\
Re-prefill & 98.96 & -- & -- & -- & -- & -- & -- \\
\midrule
\textbf{SUR (ours)} & -- & 89.58 & 98.96 & 98.96 & 98.96 & 98.96 & 98.96 \\
Dense tokens (same score) & -- & 7.29 & 14.58 & 28.13 & 58.33 & 91.67 & 97.92 \\
Max-relevance contiguous & -- & 75.00 & 86.46 & 84.38 & 81.25 & 88.54 & 98.96 \\
BM25 (same units) & -- & 82.29 & 94.79 & 97.92 & 98.96 & 98.96 & 98.96 \\
Random semantic units & -- & 2.08 & 5.21 & 11.46 & 28.13 & 48.96 & 80.21 \\
ProphetKV-Adapted & -- & 77.08 & 97.92 & 100.00 & 100.00 & 98.96 & 98.96 \\
InfoFlow-Motivated & -- & 82.29 & 97.92 & 100.00 & 98.96 & 98.96 & 98.96 \\
Cache-Craft-Adapted & -- & 77.08 & 93.75 & 97.92 & 98.96 & 98.96 & 98.96 \\
CacheBlend-Adapted & -- & 0.00 & 4.17 & 9.38 & 22.92 & 53.13 & 94.79 \\
A\textsuperscript{3}-Adapted & -- & 1.04 & 1.04 & 13.54 & 38.54 & 81.25 & 98.96 \\
LegoLink-Adapted & -- & 1.04 & 3.13 & 7.29 & 18.75 & 44.79 & 76.04 \\
CacheTune-Adapted & -- & 1.04 & 2.08 & 4.17 & 11.46 & 27.08 & 92.71 \\
SpecCache-Adapted & -- & 0.00 & 0.00 & 1.04 & 2.08 & 22.92 & 88.54 \\
\midrule
Clean-state oracle & -- & 83.33 & 95.83 & 98.96 & 98.96 & 98.96 & 98.96 \\
\bottomrule
\end{tabular}
\end{table*}

\begin{table*}[tp]
\caption{\textbf{Qwen3-30B-A3B: complete NIAH allocation sweep.} Success rate (\%, $\uparrow$) on the same $N=96$ cases. Column headings give the recomputed suffix budget; references occupy their own column. The clean-state oracle uses privileged information.}
\label{tab:revision-niah-qwen3-30b}
\centering\small
\setlength{\tabcolsep}{3.2pt}
\begin{tabular}{@{}lrrrrrrr@{}}
\toprule
Method & Reference & 2\% & 5\% & 10\% & 20\% & 40\% & 80\% \\
\midrule
Dirty & 1.04 & -- & -- & -- & -- & -- & -- \\
Shift & 13.54 & -- & -- & -- & -- & -- & -- \\
Replay & 95.83 & -- & -- & -- & -- & -- & -- \\
Re-prefill & 98.96 & -- & -- & -- & -- & -- & -- \\
\midrule
\textbf{SUR (ours)} & -- & 95.83 & 96.88 & 95.83 & 96.88 & 95.83 & 96.88 \\
Dense tokens (same score) & -- & 89.58 & 91.67 & 92.71 & 94.79 & 92.71 & 97.92 \\
Max-relevance contiguous & -- & 84.38 & 79.17 & 79.17 & 78.13 & 77.08 & 95.83 \\
BM25 (same units) & -- & 90.63 & 94.79 & 94.79 & 95.83 & 95.83 & 95.83 \\
Random semantic units & -- & 16.67 & 17.71 & 27.08 & 36.46 & 52.08 & 80.21 \\
ProphetKV-Adapted & -- & 91.67 & 94.79 & 97.92 & 95.83 & 96.88 & 95.83 \\
InfoFlow-Motivated & -- & 91.67 & 96.88 & 96.88 & 95.83 & 96.88 & 96.88 \\
Cache-Craft-Adapted & -- & 92.71 & 95.83 & 94.79 & 96.88 & 95.83 & 95.83 \\
CacheBlend-Adapted & -- & 71.88 & 79.17 & 83.33 & 86.46 & 96.88 & 95.83 \\
A\textsuperscript{3}-Adapted & -- & 28.13 & 37.50 & 61.46 & 86.46 & 92.71 & 94.79 \\
LegoLink-Adapted & -- & 16.67 & 18.75 & 28.13 & 34.38 & 55.21 & 81.25 \\
CacheTune-Adapted & -- & 12.50 & 22.92 & 39.58 & 48.96 & 69.79 & 91.67 \\
SpecCache-Adapted & -- & 14.58 & 14.58 & 20.83 & 42.71 & 81.25 & 89.58 \\
\midrule
Clean-state oracle & -- & 95.83 & 95.83 & 95.83 & 96.88 & 96.88 & 95.83 \\
\bottomrule
\end{tabular}
\end{table*}

\begin{table*}[tp]
\caption{\textbf{Llama-3.3-70B: complete NIAH allocation sweep.} Success rate (\%, $\uparrow$) on the same $N=96$ cases. Column headings give the recomputed suffix budget; references occupy their own column. The clean-state oracle uses privileged information.}
\label{tab:revision-niah-llama3-3-70b}
\centering\small
\setlength{\tabcolsep}{3.2pt}
\begin{tabular}{@{}lrrrrrrr@{}}
\toprule
Method & Reference & 2\% & 5\% & 10\% & 20\% & 40\% & 80\% \\
\midrule
Dirty & 0.00 & -- & -- & -- & -- & -- & -- \\
Shift & 1.04 & -- & -- & -- & -- & -- & -- \\
Replay & 100.00 & -- & -- & -- & -- & -- & -- \\
Re-prefill & 100.00 & -- & -- & -- & -- & -- & -- \\
\midrule
\textbf{SUR (ours)} & -- & 90.63 & 98.96 & 100.00 & 100.00 & 100.00 & 100.00 \\
Dense tokens (same score) & -- & 7.29 & 8.33 & 7.29 & 16.67 & 29.17 & 89.58 \\
Max-relevance contiguous & -- & 65.63 & 72.92 & 80.21 & 81.25 & 85.42 & 100.00 \\
BM25 (same units) & -- & 75.00 & 94.79 & 98.96 & 100.00 & 100.00 & 100.00 \\
Random semantic units & -- & 2.08 & 5.21 & 12.50 & 26.04 & 45.83 & 79.17 \\
ProphetKV-Adapted & -- & 85.42 & 98.96 & 100.00 & 100.00 & 100.00 & 100.00 \\
InfoFlow-Motivated & -- & 92.71 & 100.00 & 100.00 & 100.00 & 100.00 & 100.00 \\
Cache-Craft-Adapted & -- & 52.08 & 82.29 & 95.83 & 100.00 & 100.00 & 100.00 \\
CacheBlend-Adapted & -- & 1.04 & 1.04 & 3.13 & 7.29 & 25.00 & 75.00 \\
A\textsuperscript{3}-Adapted & -- & 1.04 & 2.08 & 8.33 & 20.83 & 54.17 & 83.33 \\
LegoLink-Adapted & -- & 1.04 & 3.13 & 9.38 & 19.79 & 43.75 & 78.13 \\
CacheTune-Adapted & -- & 1.04 & 1.04 & 1.04 & 1.04 & 1.04 & 1.04 \\
SpecCache-Adapted & -- & 1.04 & 1.04 & 1.04 & 1.04 & 5.21 & 79.17 \\
\midrule
Clean-state oracle & -- & 60.42 & 93.75 & 98.96 & 100.00 & 100.00 & 100.00 \\
\bottomrule
\end{tabular}
\end{table*}

\begin{table*}[tp]
\caption{\textbf{GLM-4.5-Air: complete NIAH allocation sweep.} Success rate (\%, $\uparrow$) on the same $N=96$ cases. Column headings give the recomputed suffix budget; references occupy their own column. The clean-state oracle uses privileged information.}
\label{tab:revision-niah-glm4-5-air}
\centering\small
\setlength{\tabcolsep}{3.2pt}
\begin{tabular}{@{}lrrrrrrr@{}}
\toprule
Method & Reference & 2\% & 5\% & 10\% & 20\% & 40\% & 80\% \\
\midrule
Dirty & 0.00 & -- & -- & -- & -- & -- & -- \\
Shift & 11.46 & -- & -- & -- & -- & -- & -- \\
Replay & 100.00 & -- & -- & -- & -- & -- & -- \\
Re-prefill & 100.00 & -- & -- & -- & -- & -- & -- \\
\midrule
\textbf{SUR (ours)} & -- & 100.00 & 100.00 & 100.00 & 100.00 & 100.00 & 100.00 \\
Dense tokens (same score) & -- & 43.75 & 57.29 & 67.71 & 85.42 & 91.67 & 100.00 \\
Max-relevance contiguous & -- & 89.58 & 88.54 & 83.33 & 83.33 & 89.58 & 100.00 \\
BM25 (same units) & -- & 94.79 & 100.00 & 100.00 & 100.00 & 100.00 & 100.00 \\
Random semantic units & -- & 18.75 & 19.79 & 22.92 & 36.46 & 46.88 & 80.21 \\
ProphetKV-Adapted & -- & 93.75 & 100.00 & 100.00 & 100.00 & 100.00 & 100.00 \\
InfoFlow-Motivated & -- & 95.83 & 98.96 & 100.00 & 100.00 & 100.00 & 100.00 \\
Cache-Craft-Adapted & -- & 91.67 & 97.92 & 100.00 & 100.00 & 100.00 & 100.00 \\
CacheBlend-Adapted & -- & 21.88 & 28.13 & 28.13 & 43.75 & 66.67 & 97.92 \\
A\textsuperscript{3}-Adapted & -- & 18.75 & 19.79 & 27.08 & 50.00 & 78.13 & 100.00 \\
LegoLink-Adapted & -- & 16.67 & 22.92 & 28.13 & 39.58 & 56.25 & 85.42 \\
CacheTune-Adapted & -- & 18.75 & 13.54 & 14.58 & 13.54 & 18.75 & 39.58 \\
SpecCache-Adapted & -- & 16.67 & 16.67 & 19.79 & 38.54 & 61.46 & 94.79 \\
\midrule
Clean-state oracle & -- & 91.67 & 100.00 & 100.00 & 100.00 & 100.00 & 100.00 \\
\bottomrule
\end{tabular}
\end{table*}

\begin{table*}[tp]
\caption{\textbf{Qwen3-235B-A22B: complete NIAH allocation sweep.} Success rate (\%, $\uparrow$) on the same $N=96$ cases. Column headings give the recomputed suffix budget; references occupy their own column. The clean-state oracle uses privileged information.}
\label{tab:revision-niah-qwen3-235b}
\centering\small
\setlength{\tabcolsep}{3.2pt}
\begin{tabular}{@{}lrrrrrrr@{}}
\toprule
Method & Reference & 2\% & 5\% & 10\% & 20\% & 40\% & 80\% \\
\midrule
Dirty & 0.00 & -- & -- & -- & -- & -- & -- \\
Shift & 68.75 & -- & -- & -- & -- & -- & -- \\
Replay & 97.92 & -- & -- & -- & -- & -- & -- \\
Re-prefill & 100.00 & -- & -- & -- & -- & -- & -- \\
\midrule
\textbf{SUR (ours)} & -- & 97.92 & 100.00 & 100.00 & 100.00 & 100.00 & 100.00 \\
Dense tokens (same score) & -- & 83.33 & 91.67 & 96.88 & 95.83 & 97.92 & 97.92 \\
Max-relevance contiguous & -- & 72.92 & 77.08 & 83.33 & 89.58 & 97.92 & 100.00 \\
BM25 (same units) & -- & 98.96 & 100.00 & 100.00 & 100.00 & 100.00 & 100.00 \\
Random semantic units & -- & 78.13 & 83.33 & 81.25 & 78.13 & 76.04 & 86.46 \\
ProphetKV-Adapted & -- & 92.71 & 95.83 & 97.92 & 95.83 & 96.88 & 95.83 \\
InfoFlow-Motivated & -- & 95.83 & 96.88 & 96.88 & 100.00 & 100.00 & 100.00 \\
Cache-Craft-Adapted & -- & 85.42 & 96.88 & 100.00 & 100.00 & 100.00 & 100.00 \\
CacheBlend-Adapted & -- & 78.13 & 83.33 & 89.58 & 93.75 & 97.92 & 96.88 \\
A\textsuperscript{3}-Adapted & -- & 76.04 & 80.21 & 87.50 & 93.75 & 94.79 & 100.00 \\
LegoLink-Adapted & -- & 77.08 & 82.29 & 86.46 & 95.83 & 98.96 & 100.00 \\
CacheTune-Adapted & -- & 71.88 & 77.08 & 79.17 & 80.21 & 83.33 & 90.63 \\
SpecCache-Adapted & -- & 78.13 & 79.17 & 84.38 & 87.50 & 91.67 & 95.83 \\
\midrule
Clean-state oracle & -- & 93.75 & 100.00 & 100.00 & 100.00 & 100.00 & 100.00 \\
\bottomrule
\end{tabular}
\end{table*}

\subsection{Natural-QA quality and latency}
\label{app:natural-detail}

Tables~\ref{tab:revision-naturalqa-qwen3-30b} and~\ref{tab:revision-naturalqa-llama3-3-70b} give the complete low-budget comparison. Tables~\ref{tab:naturalqa-counts-qwen3-30b} and~\ref{tab:naturalqa-counts-llama3-3-70b} additionally report each dataset. The tables pair exact-match accuracy with post-edit latency at each budget, retaining the dense reference paths as separate operating points.

\paragraph{Exact-match aggregation.}
Each dataset contributes 300 cases from 100 source questions and three placements. Let $k_{m,b,d}$ be the number of correct answers for model $m$, repair condition $b$, and dataset $d$. All conditions use all four datasets, giving the pooled exact-match score
\begin{equation}
 \operatorname{EM}_{m,b}
 =100\,\frac{\sum_d k_{m,b,d}}{\sum_d n_{m,b,d}}
 =100\,\frac{\sum_d k_{m,b,d}}{1200}.
\end{equation}
Dataset EM is $100k_{m,b,d}/300$, with $n_{m,b,d}=300$. Equal dataset sizes make the unrounded macro average identical to the pooled score. Rounding is applied only for presentation.

\paragraph{Latency measurement.}
Post-edit latency sums signal acquisition, budget allocation, position correction, the sparse/replay/re-prefill forward, and query-plus-answer decoding. Sparse refresh includes prefix copying and K/V concatenation. Initial dirty-context prefill and harness cache cloning are excluded; Re-prefill's edited-context rebuild is included. We report the mean over 1,200 cases, with one timed trial per case and no warmup. All methods for a case run on the same worker using the shared sequential executor. SUR selects a median of one interval in all four model--budget conditions; the packed refresh study separately measures the benefit of fragmented execution.

\paragraph{Hardware and runtime.}
Natural-QA workers allocate one GPU for Qwen3-30B-A3B and four for Llama-3.3-70B from an H100/H200 resource pool. Models use BF16 weights and automatic device placement. The runtime uses PyTorch 2.9.1.7 with CUDA 12.9; Qwen uses automatic SDPA backend selection, while Llama disables the cuDNN SDPA backend. Hardware is shared across methods within each case, but the resource pool permits different GPU types across cases.

\begin{table*}[tp]
\caption{\textbf{Qwen3-30B-A3B: Natural QA at low budgets.} Exact match (\%) on 1,200 cases (300 per dataset) and mean post-edit latency, including selection and decoding.}
\label{tab:revision-naturalqa-qwen3-30b}
\centering\small
\setlength{\tabcolsep}{8pt}
\begin{tabular}{@{}lrrr@{}}
\toprule
Method & Budget & EM (\%) $\uparrow$ & Mean latency (s) $\downarrow$ \\
\midrule
\textbf{SUR (ours)} & 2\% & 42.58 & 1.663 \\
\textbf{SUR (ours)} & 5\% & 43.42 & 2.039 \\
BM25 (same units) & 2\% & 41.08 & 1.688 \\
BM25 (same units) & 5\% & 41.50 & 2.130 \\
CacheBlend-Adapted & 2\% & 41.42 & 3.486 \\
CacheBlend-Adapted & 5\% & 41.75 & 6.306 \\
InfoFlow-Motivated & 2\% & 41.83 & 3.965 \\
InfoFlow-Motivated & 5\% & 42.58 & 6.751 \\
ProphetKV-Adapted & 2\% & 40.92 & 4.076 \\
ProphetKV-Adapted & 5\% & 41.17 & 6.853 \\
\midrule
Shift & -- & 41.67 & 1.173 \\
Replay & -- & 43.00 & 2.164 \\
Re-prefill & -- & 43.33 & 2.360 \\
\bottomrule
\end{tabular}
\end{table*}

\begin{table*}[tp]
\caption{\textbf{Llama-3.3-70B: Natural QA at low budgets.} Exact match (\%) on 1,200 cases (300 per dataset) and mean post-edit latency, including selection and decoding.}
\label{tab:revision-naturalqa-llama3-3-70b}
\centering\small
\setlength{\tabcolsep}{8pt}
\begin{tabular}{@{}lrrr@{}}
\toprule
Method & Budget & EM (\%) $\uparrow$ & Mean latency (s) $\downarrow$ \\
\midrule
\textbf{SUR (ours)} & 2\% & 56.67 & 0.500 \\
\textbf{SUR (ours)} & 5\% & 56.92 & 0.535 \\
BM25 (same units) & 2\% & 54.75 & 0.490 \\
BM25 (same units) & 5\% & 55.08 & 0.552 \\
CacheBlend-Adapted & 2\% & 54.67 & 1.572 \\
CacheBlend-Adapted & 5\% & 55.17 & 2.759 \\
InfoFlow-Motivated & 2\% & 55.42 & 1.536 \\
InfoFlow-Motivated & 5\% & 56.00 & 2.859 \\
ProphetKV-Adapted & 2\% & 55.33 & 1.571 \\
ProphetKV-Adapted & 5\% & 55.67 & 2.803 \\
\midrule
Shift & -- & 54.67 & 0.384 \\
Replay & -- & 56.83 & 0.635 \\
Re-prefill & -- & 57.00 & 0.766 \\
\bottomrule
\end{tabular}
\end{table*}

\subsection{Paired uncertainty and error transitions}
\label{app:naturalqa-paired-ci}

Confidence intervals use a dataset-stratified, source-paired bootstrap. Each replicate draws 100 source questions with replacement from each dataset, retaining all three placements and all methods for every drawn question. We use 10,000 replicates with seed 20260928 and linear-interpolated 2.5th and 97.5th percentiles. Table~\ref{tab:naturalqa-paired-ci} reports pointwise 95\% intervals for $\Delta\mathrm{EM}=\mathrm{EM}_{\sur{}}-\mathrm{EM}_{\mathrm{reference}}$, in percentage points, without a multiple-comparison adjustment.

At 5\%, \sur{} improves on Shift by 1.75 points on Qwen3-30B ($[1.12,2.24]$) and 2.25 points on Llama-3.3-70B ($[1.56,3.21]$). The corresponding differences from Re-prefill are $+0.08$ ($[-0.53,+0.63]$) and $-0.08$ ($[-0.48,+0.40]$). The 2\% Qwen3-30B point trades more quality for speed: its difference from Re-prefill is $-0.75$ points ($[-1.14,-0.02]$). These comparisons make the role of the budget explicit: more recomputation narrows the accuracy gap to the dense reference.

The error-transition analysis separates repairs from regressions. At 5\%, \sur{} corrects Shift errors on 2.50\% of Qwen3-30B cases and 3.50\% of Llama-3.3-70B cases, while introducing errors on 0.75\% and 1.25\%, respectively.

\begin{table}[htbp]
\centering\small
\caption{\textbf{Paired exact-match differences on Natural QA.} SUR minus each reference in percentage points, with 95\% paired bootstrap confidence intervals. Each model is evaluated on 400 source questions and 1,200 placement cases.}
\label{tab:naturalqa-paired-ci}
\begin{tabular}{@{}clrc@{}}
\toprule
Budget & Reference & $\Delta$EM & 95\% CI \\
\midrule
\multicolumn{4}{@{}l}{\textbf{Qwen3-30B-A3B}} \\
2\% & Shift & $+0.92$ & $[+0.28, +1.56]$ \\
2\% & Replay & $-0.42$ & $[-0.95, +0.13]$ \\
2\% & Re-prefill & $-0.75$ & $[-1.14, -0.02]$ \\
5\% & Shift & $+1.75$ & $[+1.12, +2.24]$ \\
5\% & Replay & $+0.42$ & $[-0.15, +0.88]$ \\
5\% & Re-prefill & $+0.08$ & $[-0.53, +0.63]$ \\
\midrule
\multicolumn{4}{@{}l}{\textbf{Llama-3.3-70B}} \\
2\% & Shift & $+2.00$ & $[+1.28, +2.93]$ \\
2\% & Replay & $-0.17$ & $[-0.78, +0.46]$ \\
2\% & Re-prefill & $-0.33$ & $[-1.15, +0.88]$ \\
5\% & Shift & $+2.25$ & $[+1.56, +3.21]$ \\
5\% & Replay & $+0.08$ & $[-0.65, +0.71]$ \\
5\% & Re-prefill & $-0.08$ & $[-0.48, +0.40]$ \\
\bottomrule
\end{tabular}
\end{table}


\begin{table*}[tp]
\centering\small
\caption{\textbf{Qwen3-30B-A3B: performance by dataset.} Exact match (\%, $\uparrow$) across four datasets, with 300 cases per dataset and 1,200 pooled cases.}
\label{tab:naturalqa-counts-qwen3-30b}
\setlength{\tabcolsep}{5pt}
\begin{tabular}{@{}lrrrrrr@{}}
\toprule
Method & Budget & 2Wiki & HotpotQA & IIRC & MuSiQue & Pooled \\
\midrule
\textbf{SUR (ours)} & 2\% & 52.33 & 47.33 & 36.67 & 34.00 & 42.58 \\
\textbf{SUR (ours)} & 5\% & 53.00 & 48.33 & 37.67 & 34.67 & 43.42 \\
BM25 (same units) & 2\% & 52.00 & 46.33 & 35.00 & 31.00 & 41.08 \\
BM25 (same units) & 5\% & 52.33 & 46.33 & 35.67 & 31.67 & 41.50 \\
CacheBlend-Adapted & 2\% & 52.67 & 46.33 & 35.33 & 31.33 & 41.42 \\
CacheBlend-Adapted & 5\% & 52.67 & 47.00 & 35.67 & 31.67 & 41.75 \\
InfoFlow-Motivated & 2\% & 52.33 & 46.67 & 36.00 & 32.33 & 41.83 \\
InfoFlow-Motivated & 5\% & 52.67 & 47.33 & 37.00 & 33.33 & 42.58 \\
ProphetKV-Adapted & 2\% & 52.33 & 46.00 & 35.00 & 30.33 & 40.92 \\
ProphetKV-Adapted & 5\% & 52.33 & 46.33 & 35.33 & 30.67 & 41.17 \\
\midrule
Shift & -- & 52.33 & 46.67 & 36.00 & 31.67 & 41.67 \\
Replay & -- & 52.67 & 48.00 & 37.67 & 33.67 & 43.00 \\
Re-prefill & -- & 53.00 & 47.67 & 38.00 & 34.67 & 43.33 \\
\bottomrule
\end{tabular}
\end{table*}

\begin{table*}[tp]
\centering\small
\caption{\textbf{Llama-3.3-70B: performance by dataset.} Exact match (\%, $\uparrow$) across four datasets, with 300 cases per dataset and 1,200 pooled cases.}
\label{tab:naturalqa-counts-llama3-3-70b}
\setlength{\tabcolsep}{5pt}
\begin{tabular}{@{}lrrrrrr@{}}
\toprule
Method & Budget & 2Wiki & HotpotQA & IIRC & MuSiQue & Pooled \\
\midrule
\textbf{SUR (ours)} & 2\% & 65.00 & 59.33 & 53.33 & 49.00 & 56.67 \\
\textbf{SUR (ours)} & 5\% & 65.00 & 59.67 & 53.67 & 49.33 & 56.92 \\
BM25 (same units) & 2\% & 61.00 & 58.00 & 51.33 & 48.67 & 54.75 \\
BM25 (same units) & 5\% & 61.67 & 58.67 & 51.33 & 48.67 & 55.08 \\
CacheBlend-Adapted & 2\% & 61.33 & 58.33 & 51.00 & 48.00 & 54.67 \\
CacheBlend-Adapted & 5\% & 62.00 & 58.67 & 51.33 & 48.67 & 55.17 \\
InfoFlow-Motivated & 2\% & 63.00 & 58.67 & 52.00 & 48.00 & 55.42 \\
InfoFlow-Motivated & 5\% & 64.00 & 59.33 & 52.33 & 48.33 & 56.00 \\
ProphetKV-Adapted & 2\% & 62.00 & 59.00 & 51.67 & 48.67 & 55.33 \\
ProphetKV-Adapted & 5\% & 63.00 & 59.33 & 51.33 & 49.00 & 55.67 \\
\midrule
Shift & -- & 61.33 & 57.67 & 51.33 & 48.33 & 54.67 \\
Replay & -- & 64.67 & 61.00 & 51.67 & 50.00 & 56.83 \\
Re-prefill & -- & 66.00 & 60.33 & 52.67 & 49.00 & 57.00 \\
\bottomrule
\end{tabular}
\end{table*}

\section{Ablations and design analysis}
\label{app:design-analysis}

The central question is how to allocate a fixed repair budget around a query-relevance signal. The 96-case comparisons hold cases, exact selected-token allowance, and all-layer refresh operator fixed. We first compare independent tokens, one contiguous window, and SUR using identical token scores, then examine query direction and alternative relevance rules. The full budget curves appear in Appendix~\ref{app:budget-frontiers}.

\subsection{Allocation geometry at a fixed relevance score}

Table~\ref{tab:measured-grouping-ablation} concentrates on the 2\% and 5\% operating points, where differences in budget allocation are largest. Independent top-token selection takes the $B$ highest SUR token scores $r_i$. The single-window control instead maximizes summed $r_i$ over all width-$B$ contiguous suffix windows. SUR allocates the same allowance across ranked units. This sequence compares support organization under a shared relevance signal, including a score-optimized single-region control.

\begin{table*}[t]
\centering
\caption{\textbf{Allocation geometry on Erasing-NIAH.} Exact match (\%, $\uparrow$) over the same 96 cases per model, at 2\% and 5\% retained-suffix budgets. Token selection and the single window use the same token scores as SUR; the window maximizes their sum over all width-$B$ suffix windows. BM25 and random units retain SUR's partition. Bold denotes the best value within each column.}
\label{tab:measured-grouping-ablation}
\begingroup
\fontsize{9.5}{11}\selectfont
\setlength{\tabcolsep}{2.5pt}
\renewcommand{\arraystretch}{1.12}
\begin{tabular}{@{}l*{10}{r}@{}}
\toprule
 & \multicolumn{2}{c}{Qwen3-8B} & \multicolumn{2}{c}{Qwen3-30B} & \multicolumn{2}{c}{Llama-70B} & \multicolumn{2}{c}{GLM-Air} & \multicolumn{2}{c}{Qwen3-235B} \\
\cmidrule(lr){2-3} \cmidrule(lr){4-5} \cmidrule(lr){6-7} \cmidrule(lr){8-9} \cmidrule(lr){10-11}
Allocation rule & 2\% & 5\% & 2\% & 5\% & 2\% & 5\% & 2\% & 5\% & 2\% & 5\% \\
\midrule
\textbf{SUR} & \textbf{89.6} & \textbf{99.0} & \textbf{95.8} & \textbf{96.9} & \textbf{90.6} & \textbf{99.0} & \textbf{100.0} & \textbf{100.0} & 97.9 & \textbf{100.0} \\
Same-score tokens & 7.3 & 14.6 & 89.6 & 91.7 & 7.3 & 8.3 & 43.8 & 57.3 & 83.3 & 91.7 \\
Single relevance window & 75.0 & 86.5 & 84.4 & 79.2 & 65.6 & 72.9 & 89.6 & 88.5 & 72.9 & 77.1 \\
Same-unit BM25 & 82.3 & 94.8 & 90.6 & 94.8 & 75.0 & 94.8 & 94.8 & \textbf{100.0} & \textbf{99.0} & \textbf{100.0} \\
Random units & 2.1 & 5.2 & 16.7 & 17.7 & 2.1 & 5.2 & 18.8 & 19.8 & 78.1 & 83.3 \\
\bottomrule
\end{tabular}
\endgroup
\end{table*}

\begin{table*}[t]
\centering
\caption{\textbf{Alternative allocation policies.} Exact match (\%, $\uparrow$) on the same 96 cases and budgets. Adapted/motivated methods use the common repair operator, not their native systems. Their score-driven token allocation differs from SUR's unit allocation; this is a comparison of complete selection rules, not an isolated scoring ablation.}
\label{tab:measured-signal-comparison}
\begingroup
\fontsize{9.5}{11}\selectfont
\setlength{\tabcolsep}{2.5pt}
\renewcommand{\arraystretch}{1.12}
\begin{tabular}{@{}l*{10}{r}@{}}
\toprule
 & \multicolumn{2}{c}{Qwen3-8B} & \multicolumn{2}{c}{Qwen3-30B} & \multicolumn{2}{c}{Llama-70B} & \multicolumn{2}{c}{GLM-Air} & \multicolumn{2}{c}{Qwen3-235B} \\
\cmidrule(lr){2-3} \cmidrule(lr){4-5} \cmidrule(lr){6-7} \cmidrule(lr){8-9} \cmidrule(lr){10-11}
Allocation rule & 2\% & 5\% & 2\% & 5\% & 2\% & 5\% & 2\% & 5\% & 2\% & 5\% \\
\midrule
\textbf{SUR} & \textbf{89.6} & \textbf{99.0} & \textbf{95.8} & \textbf{96.9} & 90.6 & 99.0 & \textbf{100.0} & \textbf{100.0} & \textbf{97.9} & \textbf{100.0} \\
ProphetKV-Adapted & 77.1 & 97.9 & 91.7 & 94.8 & 85.4 & 99.0 & 93.8 & \textbf{100.0} & 92.7 & 95.8 \\
InfoFlow-Motivated & 82.3 & 97.9 & 91.7 & \textbf{96.9} & \textbf{92.7} & \textbf{100.0} & 95.8 & 99.0 & 95.8 & 96.9 \\
Cache-Craft-Adapted & 77.1 & 93.8 & 92.7 & 95.8 & 52.1 & 82.3 & 91.7 & 97.9 & 85.4 & 96.9 \\
CacheBlend-Adapted & 0.0 & 4.2 & 71.9 & 79.2 & 1.0 & 1.0 & 21.9 & 28.1 & 78.1 & 83.3 \\
\bottomrule
\end{tabular}
\endgroup
\end{table*}

\paragraph{Coherence helps, and multiple relevant regions help further.}
At 2\%, the token--window--SUR sequence yields 7.3\%, 75.0\%, and 89.6\% EM on Qwen3-8B, and 7.3\%, 65.6\%, and 90.6\% on Llama-3.3-70B. Thus concentrating computation within a score-optimized window recovers much of the gap, while allocation across query-relevant regions improves further. SUR exceeds the single-window control at 2\% on all five systems. At 5\%, SUR and dense token selection attain 98.96\% versus 14.58\% success on Qwen3-8B and 98.96\% versus 8.33\% on Llama-3.3-70B. The corresponding contrast is smaller on Qwen3-30B-A3B: 96.88\% versus 91.67\%. The magnitude depends on the model; the measured advantage belongs to the allocation policy, which changes support geometry and interval count.

\paragraph{Grouping alone is insufficient.}
Random units retain SUR's partition but remove query ranking, reaching only 2.1\% EM on Qwen3-8B and Llama-3.3-70B at 2\%. SUR exceeds this control on all five systems. The useful combination is coherent allocation and query direction, rather than grouping alone.

\subsection{Inexpensive relevance versus attention and state signals}

\paragraph{Cheap relevance signals support effective region allocation.}
BM25 retains SUR's unit partition and allocator, replacing its relevance rule with lexical scoring. It reaches 82.29\% EM on Qwen3-8B at 2\%. SUR exceeds BM25 at 2\% on four models, whereas Qwen3-235B-A22B favors BM25 (98.96\% versus 97.92\%).  This contrast tests the complete relevance rule, not embedding similarity or the identical-token bonus in isolation. Both inexpensive rules can direct effective repair.

\paragraph{Richer signals are evaluated as complete policies.}
The attention- and state-derived allocators in Table~\ref{tab:measured-signal-comparison} extend the comparison to richer signals. ProphetKV-Adapted averages query attention over heads and layers; InfoFlow-Motivated uses an attention-norm signal. Cache-Craft-Adapted scores exposure to the deleted span, while CacheBlend-Adapted scores an early-layer K/V deviation probe. Appendix~\ref{app:deletion-matrix-protocol} gives the precise definitions and signal-acquisition costs.

At 2\%, SUR exceeds each of these four policies on Qwen3-8B, Qwen3-30B-A3B, GLM-4.5-Air, and Qwen3-235B-A22B. On Llama-3.3-70B, however, InfoFlow-Motivated reaches 92.71\%, compared with SUR's 90.63\%, and reaches 100\% at 5\%, compared with 98.96\%. These results favor combining a low-cost signal with region-level allocation and show that richer signals can improve particular operating points. These policies select individual score-ranked tokens, so the comparison varies both scoring and support construction; it does not isolate signal quality.

\subsection{Matched-token evaluation and design variants}

\paragraph{A larger matched-token holdout.}
A separate Qwen3-8B experiment evaluates SUR V1 and token-query selection on a frozen 599-case holdout. Table~\ref{tab:historical-holdout} compares matched mean selected-token counts: region-level selection improved EM by 9.85 percentage points at the smaller support and by 1.84 points at the larger support. The benefit was concentrated at 1K--2K contexts; both selectors approached saturation from 4K onward, with one SUR regression at 32K. This larger cohort supports the grouping observation under a matched-token protocol. The SUR V1 allocator and budget definition differ from the main suffix-budget sweep, so the results are analyzed separately.

\begin{table}[t]
\centering
\caption{\textbf{Matched-token evaluation.} Qwen3-8B exact match (\%) over 599 cases. SUR V1 and token-query selection use matched mean selected-token counts, evaluated separately from the suffix-budget sweep.}
\label{tab:historical-holdout}
\small
\setlength{\tabcolsep}{5pt}
\begin{tabular}{@{}rrr@{}}
\toprule
Mean selected tokens & SUR V1 & Query \\
\midrule
144.6 & 97.5 & 87.6 \\
359.2 & 99.7 & 97.8 \\
\bottomrule
\end{tabular}
\end{table}

\paragraph{Joint scoring and window variants.}
A separate 24-case Qwen3-8B comparison tested top-four aggregation with a forward partial window against mean aggregation with a prefix partial unit. The former also used a suffix-relative allowance, whereas the latter used a whole-prompt allowance.  The variants tied in observed EM from the nominal 5\% points upward; their target-reaching points selected approximately 244 versus 519 tokens and formed 7.3 versus 20.1 intervals on average. The top-four variant had lower agreement with the clean greedy continuation. The comparison illustrates a quality--support tradeoff. Because three design choices change together and the budget convention differs from the main sweep, it does not isolate any individual component.

\paragraph{Scope of the design evidence.}
Without matched fixed-block and multiple-local-window controls, these results do not establish a unique advantage for linguistic boundaries. The identical-token bonus, top-four aggregation, forward decay, and unit-length thresholds are held at the settings in Appendix~\ref{app:deletion-matrix-protocol}; their individual sensitivity remains a direction for further analysis.

\section{Matched evidence-location experiment}
\label{app:matched-extended}

The direct allocation controls in Appendix~\ref{app:design-analysis} test how \sur{} spends a repair budget. This supplementary experiment asks an earlier question: whether moving task evidence downstream changes repair demand when edit geometry is fixed. On a separately constructed panel, the gap between full suffix Replay and Shift measures the value of contextual recomputation at each evidence placement; it is not an estimate of \sur{}'s gain.

\subsection{Construction and controls}

The candidate pool contains the same 400 natural-QA sources.  All annotated support documents form an untruncated evidence bundle.  A control bundle contains one to four native non-support documents, rejecting documents with a normalized answer alias or support title.  Whole natural text chunks and a fixed neutral padding family extend both bundles to an identical token width.  A source is excluded if exact construction is infeasible; eligibility does not depend on a model outcome.

The bundles exchange slots immediately before and after a fixed 256-token deletion.  Within each tail condition, the exchange holds edit index, deletion width, slot width, retained-suffix length, text multiset, query, and answer target constant.  A neutral-tail extension outside the slots crosses this exchange to form four cells: upstream/short, downstream/short, upstream/long, and downstream/long.  Thus the side contrast fixes suffix length, while the tail contrast jointly changes filler, stale-state volume, and evidence-to-query distance.  The evidence exchange also changes support order and recency; it is not an isolated intervention on a unique semantic mediator.

Geometry selection uses only tokenization and searches a fixed list of candidates before model execution. Their (slot width, edit index, short suffix, long suffix) values are $(512,2048,1024,4096)$, then $(768,3072,1536,4096)$, then $(1024,4096,2048,4096)$ tokens.  Literal support text is not truncated to force a match.  Exact boundary tokenization, absence of protected answer strings in padding, and absence of special-token IDs in inserted content are checked before execution.  The common panel contains 275 sources: 79 from 2WikiMultiHopQA, 64 from HotpotQA, 64 from IIRC, and 68 from MuSiQue. Qwen3-8B, Llama-3.1-8B-Instruct, and Qwen3-30B-A3B evaluate all four cells, giving 825 source--model units.  Additional sparse diagnostics use an outcome-blind 100-source subset, with 25 sources per dataset.

\subsection{Estimands and uncertainty}

For evidence side $z\in\{U,D\}$ and tail $\ell\in\{S,L\}$, let $G_{z,\ell}$ denote the mean repair advantage.  For the preregistered likelihood endpoint this is gold-answer $\mathrm{NLL}_{\minimal{}}- \mathrm{NLL}_{\exactrepair{}}$; for task EM it is $\mathrm{EM}_{\exactrepair{}}-\mathrm{EM}_{\minimal{}}$. The three factorial contrasts are
\begin{align}
 \tau_{\mathrm{side}}&=\tfrac12\bigl[(G_{D,S}-G_{U,S})\nonumber\\
 &\qquad +(G_{D,L}-G_{U,L})\bigr],\\
 \tau_{\mathrm{tail}}&=\tfrac12\bigl[(G_{U,L}-G_{U,S})\nonumber\\
 &\qquad +(G_{D,L}-G_{D,S})\bigr],\\
 \tau_{\mathrm{interaction}}&=(G_{D,L}-G_{U,L})\nonumber\\
 &\qquad -(G_{D,S}-G_{U,S}).
\end{align}
Canonical-answer NLL averages teacher-forced negative log probabilities over answer tokens and includes every complete source.  The EM secondary uses the stratum answered correctly by the clean reference in all four cells: 300 of 825 source--model units, comprising 106, 95, and 99 units for the three models.  This common stratum is not conditioned on Shift failure.

We average paired contrasts within source, then within dataset, giving equal weight to the four datasets and fixed models.  The 10,000-replicate bootstrap resamples sources within dataset and carries all associated cells and model rows together.  Intervals use the 2.5th and 97.5th percentile. The pooled secondary family (NLL tail and interaction; EM side, tail, and interaction) uses Holm correction. Because the primary NLL result is null, model-specific intervals are descriptive rather than confirmatory.

\begin{table*}[tp]
\caption{\textbf{Matched factorial effects.} EM effects are in percentage points and NLL effects in nats.  The preregistered NLL evidence-side interval includes zero, whereas the EM secondary is positive after correction.}
\label{tab:matched-factorial-effects}
\centering
\small
\begin{tabular}{llrrr}
\toprule
Endpoint & Contrast & Estimate & 95\% CI & Holm $p$ \\
\midrule
NLL & Evidence side & $-0.039$ & [$-0.099$, 0.021] & --- \\
 & Neutral tail & $-0.031$ & [$-0.066$, 0.004] & 0.308 \\
 & Interaction & $-0.001$ & [$-0.075$, 0.068] & 1.000 \\
\addlinespace
EM & Evidence side & 5.99 & [2.87, 9.26] & 0.002 \\
 & Neutral tail & 0.49 & [$-1.6$, 2.6] & 1.000 \\
 & Interaction & $-1.2$ & [$-4.8$, 2.2] & 1.000 \\
\bottomrule
\end{tabular}
\end{table*}

The likelihood side contrast is heterogeneous across datasets: $-0.155$, $-0.045$, $0.022$, and $0.021$ nats on 2WikiMultiHopQA, HotpotQA, IIRC, and MuSiQue. The positive task-EM effect does not imply a general improvement in answer likelihood.

\subsection{Localized refresh and continuation fidelity}

A fixed-slot control refreshes the complete downstream exchange slot. Its token budget and interval are identical whether the slot holds evidence or non-supporting content.  Table~\ref{tab:matched-clean-effects} compares its side effect with complete suffix replay on output distribution and continuation agreement.  Similar point estimates indicate that the exchanged region carries much of this measured side effect; they do not establish equivalence of the repairs or identify a minimum sufficient support.

\begin{table*}[tp]
\caption{\textbf{Evidence-side effects on clean-reference fidelity.} Positive effects mean a larger repair benefit over Shift with downstream evidence.  Entries are estimates and source-bootstrap 95\% intervals. TV and JSD use reductions in distance; agreement effects are in percentage points (pp).}
\label{tab:matched-clean-effects}
\centering
\small
\setlength{\tabcolsep}{4pt}
\begin{tabular}{lrr}
\toprule
Endpoint & Complete replay & Fixed-slot refresh \\
\midrule
First-token TV & .0696 [.0560, .0832] & .0709 [.0571, .0849] \\
First-token JSD & .0311 [.0238, .0385] & .0315 [.0240, .0391] \\
Argmax agreement (pp) & 8.37 [6.27, 10.49] & 8.17 [5.96, 10.42] \\
Sequence agreement (pp) & 11.77 [9.27, 14.25] & 11.45 [8.81, 14.16] \\
\bottomrule
\end{tabular}
\end{table*}

Task accuracy, continuation fidelity, and K/V tensor identity remain different endpoints.  The fixed-slot NLL side contrast is $-0.031$ nats ([$-0.093$, $0.032$]); the localized fidelity result does not establish a positive likelihood effect either.

\section{Numerical execution-path controls}
\label{app:numerical-systems}
\label{app:numerical-timing-contract}

Replay and Re-prefill agree on the intended edited history but can disagree numerically under BF16. A separate repeat control measures this effect on 40 sources with four geometry cells each, giving 160 cells per model.  Its models are Qwen3-8B, Llama-3.1-8B, and Qwen3-30B-A3B; it is not a repeat of the five-model NIAH or two-model Natural-QA sweep.

Each cell has three decodes from a shared Re-prefill cache and three independently built caches per reference path.  For categorical outputs $v_1,v_2,v_3$, within-condition agreement is
\begin{equation}
 A_{\mathrm{within}}(v)=\frac13\sum_{1\leq i<j\leq3}\mathbf{1}[v_i=v_j].
 \label{eq:repeat-within-agreement}
\end{equation}
Cross-path agreement averages over all nine Replay--Re-prefill pairs. The unclipped excess disagreement is $A_{\mathrm{within}}(\full{})-A_{\mathrm{cross}}(\exactrepair{},\full{})$. For NLL, the cell-level analogue subtracts the median of three absolute within-Re-prefill differences from the median of nine absolute cross-path differences.  Table~\ref{tab:matched-numerical-floor} summarizes these execution-path references.  Exact-logit agreement means byte equality of the recorded float32 first-logit vectors, not tolerance-based closeness.

\begin{table*}[tp]
\caption{\textbf{Numerical-repeat controls on a separate 160-cell cohort per model.} Categorical entries are disagreement percentages; excess is in percentage points.  NLL entries summarize absolute differences.  The reference model Llama-3.1-8B differs from the main sweep's Llama-3.3-70B.}
\label{tab:matched-numerical-floor}
\centering\small
\setlength{\tabcolsep}{4pt}
\begin{tabular}{@{}llrrrr@{}}
\toprule
Model & Output & Shared (\%) & Re-prefill (\%) & Cross-path (\%) & Excess (pp) \\
\midrule
Qwen3-8B & Sequence & 0.42 & 0.83 & 2.29 & 1.46 \\
 & Exact logits & 0.00 & 0.00 & 100.00 & 100.00 \\
 & Argmax & 0.00 & 0.00 & 1.88 & 1.88 \\
\addlinespace
Llama-3.1-8B & Sequence & 2.29 & 1.67 & 3.68 & 2.01 \\
 & Exact logits & 0.00 & 0.00 & 100.00 & 100.00 \\
 & Argmax & 0.00 & 0.00 & 0.62 & 0.62 \\
\addlinespace
Qwen3-30B-A3B & Sequence & 0.42 & 0.83 & 10.69 & 9.86 \\
 & Exact logits & 0.00 & 0.00 & 100.00 & 100.00 \\
 & Argmax & 0.00 & 0.00 & 8.75 & 8.75 \\
\bottomrule
\end{tabular}
\par\medskip
\begin{tabular}{@{}lrrr@{}}
\toprule
Model & Within Re-prefill $|\Delta\mathrm{NLL}|$ & Cross-path $|\Delta\mathrm{NLL}|$ & Excess \\
\midrule
Qwen3-8B & 0.0000 & 0.0785 & 0.0785 \\
Llama-3.1-8B & 0.0000 & 0.0278 & 0.0278 \\
Qwen3-30B-A3B & 0.0000 & 0.2037 & 0.2037 \\
\bottomrule
\end{tabular}
\end{table*}

These controls distinguish decoder reuse, independent cache builds, and cross-path computation.  They are not a deterministic-kernel or FP32 reference and do not define a universal threshold for semantic error. In particular, 100\% exact-logit disagreement can coexist with far smaller argmax and sequence disagreement. Measured Replay accuracy is therefore an operational comparator, not a mathematical upper bound on sparse accuracy.

\paragraph{Distributional recovery is not KV-tensor recovery.}
Where distributional diagnostics are reported, first-token total variation is $\mathrm{TV}(p,q)=\tfrac12\sum_v|p(v)-q(v)|$, and normalized recovery is
\begin{equation}
 \operatorname{NR}_{\mathrm{TV}}(a)
 =1-\frac{\mathrm{TV}(p_a,p_{\mathrm{clean}})}
          {\mathrm{TV}(p_{\mathrm{dirty}},p_{\mathrm{clean}})}.
 \label{eq:normalized-tv-recovery}
\end{equation}
This ratio is formed per case, is undefined when its denominator is zero, and is not clipped; a reported median is a median of defined per-case ratios.  It measures first-token output-distribution recovery, not similarity of the KV tensors. It complements exact-match accuracy and does not supply uncertainty for the main aggregate results.

\section{Reproducibility}
\label{app:reproducibility}

The experiment configuration specifies model/tokenizer revisions, native chat rendering, decoding limits, the case roster, and the repair operator. Case identifiers preserve the common NIAH roster and source-level Natural-QA pairing.  Sparse support is checked for uniqueness, suffix membership, and the exact budget before execution; case records retain selected positions, realized intervals, and component timings.

Tables and figures use the same result series. Validation checks model and method coverage, exact token allowances, per-dataset denominators, and integer-count pooling before rounding. Natural-QA conditions each contain four 300-case datasets, with $0\leq k\leq n$ for every correct count. Paired differences satisfy $k_{\sur{}}-k_{\mathrm{reference}}=k_{\mathrm{repaired}}-k_{\mathrm{regressed}}$.

Natural-QA confidence intervals preserve source-question pairing across placements (Appendix~\ref{app:naturalqa-paired-ci}). The evidence-location and numerical experiments use their own matched cohorts, described in Appendices~\ref{app:matched-extended} and~\ref{app:numerical-systems}. Execution configurations specify the timed stages, warmups, repetitions, and hardware separately for task-level latency and refresh microbenchmarks.

\section{Packed execution: implementation and controls}
\label{app:kernel-execution}

\subsection{Scaling of budgeted recomputation}
\label{app:recompute-scaling}

The budget reduces both the number of token-local forwards and the number of attention query rows. Let $P=|p|$, $S=|s|$, $N=P+S$, $A\subseteq\{1,\ldots,S\}$, and $B=|A|$. For $L$ transformer layers, the ratio of refreshed token-layer positions to full suffix Replay is
\begin{equation}
 \frac{LB}{LS}=\frac{B}{S}\approx\frac{b}{100}.
 \label{eq:recompute-volume}
\end{equation}
For a 10,000-token retained suffix, a 2--5\% budget refreshes 200--500 positions through every layer, reusing the remaining 9,500--9,800. For a fixed dense transformer architecture of hidden width $d$ and depth $L$, an idealized useful-arithmetic model is
\begin{align}
 F_{\mathrm{Replay}}
 &=\Theta\!\left(L\!\left[Sd^2+d\!\sum_{i=1}^{S}(P+i)\right]\right),\\
 F_{\mathrm{refresh}}
 &=\Theta\!\left(L\!\left[Bd^2+d\!\sum_{i\in A}(P+i)\right]\right).
 \label{eq:refresh-arithmetic}
\end{align}
The first term covers Q/K/V and output projections and the MLP, whose per-token arithmetic is fixed by the architecture. Its reduction factor is exactly $B/S$. The second counts legal causal attention pairs: each selected query still reads its complete legal prefix, but unselected positions no longer issue new queries.

Full suffix Replay evaluates $SP+S(S+1)/2$ causal pairs per head. For any support of size $B$, the largest possible selected-pair count is obtained by choosing the final $B$ suffix positions. Consequently,
\begin{equation}
 \begin{aligned}
 \frac{\sum_{i\in A}(P+i)}{SP+S(S+1)/2}
 &\leq\frac{B(P+S)-B(B-1)/2}{SP+S(S+1)/2}\\
 &\leq\min\!\left\{1,\frac{2B}{S}\right\}.
 \end{aligned}
 \label{eq:attention-pair-bound}
\end{equation}
For a 10K suffix, a 2\% budget therefore refreshes 200 positions and uses at most 4\% of Replay's causal query--key pairs; a 5\% budget refreshes 500 positions and uses at most 10\%. Token-local projection/MLP arithmetic uses 2\% and 5\%, respectively. These are architectural operation counts for the recomputation stage. Selection, Shift, cache traffic, masked-pair computation, tile padding, and invocation overhead are accounted for separately in execution timing.

Packing addresses a second cost: repeated model traversals. Let $R$ be the number of selected intervals, $H$ the fixed overhead per traversal, and $C_{\mathrm{interval}}(A)$ and $C_{\mathrm{packed}}(A)$ the support-dependent compute times:
\begin{equation}
 \begin{aligned}
 T_{\mathrm{interval}}&\approx RH+C_{\mathrm{interval}}(A),\\
 T_{\mathrm{packed}}&\approx H+C_{\mathrm{packed}}(A).
 \end{aligned}
 \label{eq:packing-cost}
\end{equation}
At fixed $A$, interval-wise execution incurs $R$ model traversals, while packing incurs one. The support-dependent terms can differ because packing changes matrix shapes and kernel utilization. Thus the model identifies an overhead-amortization regime with gains on the order of $R$; the measured 16.1$\times$ is an empirical result, not an exact latency prediction from a token-count ratio. Growing the suffix increases the number of avoided forwards, while the actual time saved depends on this balance of arithmetic and execution overhead.

\paragraph{From refresh to complete-answer cost.}
\label{app:latency-accounting}
Refresh time covers all-layer projections, attention, and MLPs, together with cache copying, planning, and uploads. With serial preparation,
\begin{equation}
 \begin{aligned}
 T_{\mathrm{recovery}}&=T_{\mathrm{select}}+T_{\mathrm{shift}}
                         +T_{\mathrm{refresh}},\\
 T_{\mathrm{post\text{-}edit}}&=T_{\mathrm{recovery}}+T_{\mathrm{answer}}.
 \end{aligned}
 \label{eq:latency-accounting}
\end{equation}
Selection includes signal acquisition and unit construction; answering includes query processing and decoding. Overlapped recovery uses its actual critical path. Refresh measures execution efficiency; recovery includes selection and position repair; post-edit latency measures the cost of delivering an answer.

\subsection{Mixed-cache recurrence and implementation}

The packed executor accepts the same fixed support as interval-wise repair. Let $U=(u_1,\ldots,u_B)$ list the selected positions in increasing order within the edited context, using zero-based indexing: suffix index $i\in A$ maps to $|p|+i-1$. The input shifted cache $\mathcal C_0$ is not modified: one complete output clone is made, selected token IDs are gathered once, and their embeddings initialize a $B$-row hidden-state buffer that remains compact through all layers.

At layer $\ell$, the model's native projections, Q/K normalization, and rotary embeddings produce fresh Q/K/V for the packed rows using their positions $U$. Let $K^+,V^+$ denote the fresh K/V and $K^0,V^0$ the layer's entries in $\mathcal C_0$. Before attention, the output cache is updated to
\begin{equation}
 (K_j,V_j)=\begin{cases}
 (K^+_r,V^+_r), & j=u_r,\\
 (K^{0}_j,V^{0}_j), & j\notin U.
 \end{cases}
\end{equation}
For packed query row $r$, the attention head output is
\begin{equation}
 O_r=\operatorname{softmax}\!\left(
 \frac{Q_r K_{0:u_r}^{\top}}{\sqrt{d_h}}\right)V_{0:u_r},
 \label{eq:indexed-attention}
\end{equation}
where $d_h$ is the head dimension and $0{:}u_r$ includes position $u_r$. Each selected query reads both fresh K/V at causally available selected positions and reused K/V in the intervening gaps; future positions remain masked. The native output projection, residuals, normalization, and MLP produce the packed hidden states for the next layer.

With fixed positions, causal attention, and token-local operations outside attention, the layer-major schedule preserves the interval-major recurrence in real arithmetic. Both schedules start from the same embeddings and shifted cache. If the selected hidden inputs agree at a layer, their fresh Q/K/V agree, and each query reads the same fresh and reused causal entries; the layer outputs therefore agree as well. Writing future selected K/V early does not affect this argument because those entries are masked. Changing GEMM shapes and reduction order can nevertheless change BF16 outputs, so the scheduling equivalence is not a claim of bitwise equality (Appendix~\ref{app:kernel-numerics}).

The default backend uses native SDPA. The adapter is implemented for Transformers 4.57.6 native dense Llama/Qwen3 in evaluation mode, full attention, default fixed RoPE, batch size one, and a single device. It does not accept MoE, sliding-window attention, offload, dynamic RoPE, or custom model forwards. This scope is narrower than the five-model quality comparison. The reference implementation also provides a one-shot sparse path, included as a control for the benefit of packing already available without the new adapter.

\paragraph{Indexed attention and writes.}
The optional Triton attention processes 16 packed query rows and 64 K/V positions per tile, with four warps and two stages. Grouped-query heads use the original head mapping. Online softmax and tiled accumulation follow established IO-aware attention techniques \citep{dao2022flashattention}. The kernel keeps probabilities in FP32 and explicitly uses TF32x3 products for $P V$, rather than rounding probabilities to BF16/FP16. This is not IEEE-exact FP32. The kernel requires NVIDIA SM80+, FP16/BF16 inputs, and head dimension 64 or 128. An optional indexed scatter writes fresh K and V in one launch. Scatter and attention remain separate, ordered launches on the same stream; there is no cross-program global barrier assumption. Torch indexed writes are the default because they are faster in the measured complete-refresh path.

\paragraph{Reusable execution plan.}
For a query tile with first and last selected positions $u_{\min},u_{\max}$, the plan records the complete-prefix and total K/V tile counts:
\begin{align}
 n_{\mathrm{full}}&=\lfloor(u_{\min}+1)/64\rfloor,\\
 n_{\mathrm{total}}&=\lceil(u_{\max}+1)/64\rceil.
\end{align}
Complete-prefix tiles are visible to every row; remaining tiles apply the per-element test $j\leq u_r$. Widely separated selected rows can require multiple partially visible tiles. Positions and tile bounds are prepared and uploaded once per refresh, then reused across layers. Given sorted $U$, plan construction is $O(B)$, without a materialized $B\times N$ mask for the custom backend. Attention still processes $M(U)=\sum_{u\in U}(u+1)$ causal pairs per head, plus tile padding; it is not $B^2$ attention. The full output clone costs $O(LNH_{\mathrm{KV}}d_h)$ memory traffic, and selected writes cost $O(LBH_{\mathrm{KV}}d_h)$.

\subsection{Repeated execution measurements}

The execution benchmarks use one Qwen3-8B checkpoint (36 layers, hidden size 4096, 32 query and 8 K/V heads, head dimension 128) on an H100 80GB HBM3, BF16, batch size one. The environment uses CUDA 12.9, PyTorch 2.9.1, Transformers 4.57.6, and Triton 3.5.1. A random-token prompt loses eight tokens and undergoes the reference eager Shift before refresh timing. The selected budget is $B=64$; $R=1,4,16$ controls interval count. Within each length, $M(U)$ is identical across these geometries: 65,568 at 2K and 262,176 at 8K. Tile occupancy can still differ.

Each backend receives ten warmups and twenty randomly interleaved repeated measurements. Table~\ref{tab:kernel-full} reports synchronized wall-time medians, including output cloning, host planning/upload, checks, native module execution, and any per-call mask construction. Prefill, selector, Shift, later query/answer decoding, and first compilation are excluded. There is no CUDA Graph. Original one-shot and direct sequential controls retain their distinct cache-assembly costs. Packed SDPA, FlexAttention \citep{pytorch2025flexattention}, and custom attention use the same independent-output adapter. Thus their comparison isolates backend choice more closely than comparison with the original sequential implementation.

\begin{table*}[t]
\centering
\caption{\textbf{Complete matched-work refresh timings (ms).} Qwen3-8B, H100, $B=64$; medians of twenty measurements after ten warmups. Direct sequential updates the cache without repeated full assembly. Custom uses Torch writes unless marked fused and remains experimental (Appendix~\ref{app:kernel-numerics}).}
\label{tab:kernel-full}
\small
\begin{tabular}{@{}rrrrrrrrr@{}}
\toprule
$N$ & $R$ & Sequential & Direct seq. & One-shot & SDPA & Custom & + fused & Flex \\
\midrule
2048 & 1 & 33.051 & 31.465 & 34.593 & 32.485 & 32.389 & 34.169 & 53.008 \\
2048 & 4 & 131.290 & 121.456 & 34.624 & 32.672 & 32.470 & 34.063 & 56.661 \\
2048 & 16 & 530.068 & 485.378 & 34.942 & 32.952 & 32.775 & 34.588 & 56.858 \\
8192 & 1 & 38.848 & 32.571 & 42.126 & 34.860 & 33.372 & 35.228 & 95.613 \\
8192 & 4 & 142.898 & 120.679 & 41.198 & 34.733 & 32.235 & 34.051 & 144.759 \\
8192 & 16 & 552.498 & 494.826 & 42.186 & 34.757 & 33.375 & 35.161 & 144.376 \\
\bottomrule
\end{tabular}
\end{table*}

\paragraph{Attention-only comparison.}
Table~\ref{tab:kernel-attention} holds BF16 Q/K/V resident, with the same 64 queries, 32 query heads, 8 K/V heads, and head dimension 128. Ten warmups precede thirty interleaved samples. Timing includes Python dispatch, output allocation, and conversion to the same contiguous output layout; planning/upload, mask construction, compilation, cache clones, scatter, and other transformer operators are excluded. Every output is checked against the same explicit FP64 absolute-causal oracle. The pinned Hugging Face SDPA wrapper repeats K/V heads on its explicit-mask path. Direct SDPA with grouped-query attention is therefore a stronger microbaseline: custom is faster only at 8K with one interval (1.18$\times$), and slower in the other five cases. FlexAttention is not tuned for these workloads. These wrapper-inclusive timings do not establish a general ranking of attention algorithms or pure device-kernel latency.

\begin{table*}[t]
\centering
\caption{\textbf{Attention-only median time ($\mu$s).} Resident BF16 operands; thirty measurements after ten warmups. Direct SDPA enables GQA rather than explicitly repeating K/V heads.}
\label{tab:kernel-attention}
\small
\begin{tabular}{@{}rrrrrr@{}}
\toprule
$N$ & $R$ & HF SDPA & Direct SDPA+GQA & Compiled Flex & Custom \\
\midrule
2048 & 1 & 167.3 & 124.9 & 1019.2 & 135.1 \\
2048 & 4 & 167.6 & 121.8 & 1106.8 & 148.3 \\
2048 & 16 & 164.8 & 122.2 & 1108.3 & 147.6 \\
8192 & 1 & 411.1 & 273.9 & 2145.7 & 232.5 \\
8192 & 4 & 401.8 & 266.6 & 3477.7 & 293.7 \\
8192 & 16 & 402.8 & 277.9 & 3468.8 & 295.7 \\
\bottomrule
\end{tabular}
\end{table*}

\paragraph{Preparation overlap.}
A reusable CPU worker constructs semantic units while the GPU computes relevance. After scores return, CPU budget allocation and host planning overlap the original eager Shift. The selector, scores, retokenization, and selected support are unchanged. A separate synthetic natural-text input has $N=2412$, $B=64$, and $R=5$ after deleting 15 tokens. Using the same experimental custom-attention/fused-scatter backend on both sides, ten warmups and twenty repeated samples give 56.738\,ms for serial preparation and 53.790\,ms with overlap. This includes scoring, unit construction, selection, plan/upload, Shift, and refresh, but excludes prefill, initial prompt/query tokenization, worker creation, compilation, and subsequent decoding. Support, diagnostics, shifted/output cache, and query logits agree bitwise across the two paths. This is one input, not the Natural-QA evaluation or an average end-to-end generation result.

\subsection{Timing examples on actual QA supports}
\label{app:qa-recovery-timing}

To connect the scheduling result to actual allocator outputs, we time one fixed input from each QA dataset on the same Qwen3-8B/H100 configuration. We select the lexicographically first source ID and its before-support placement, independently of outputs or timing. Edited lengths are 1,319, 1,606, 1,421, and 2,769 tokens. Each method receives two warmups and five interleaved measured repetitions; Table~\ref{tab:qa-recovery-timing} gives means with equal weight per input.

\begin{table}[t]
\centering
\caption{\textbf{Real-input recovery timing examples (ms).} Four fixed QA inputs on Qwen3-8B/H100. Refresh includes the executor's planning and output ownership; recovery additionally includes relevance, unit construction, allocation, and Shift. Reference operators have no SUR selection stage.}
\label{tab:qa-recovery-timing}
\small
\begin{tabular}{@{}lrr@{}}
\toprule
Method & Refresh/operator & Recovery \\
\midrule
\sur{} 2\%, interval-wise & 32.87 & 51.39 \\
\sur{} 2\%, packed & 31.98 & 50.43 \\
\sur{} 5\%, interval-wise & 48.99 & 67.25 \\
\sur{} 5\%, packed & 31.98 & 50.71 \\
\midrule
Full suffix Replay & 48.57 & 48.57 \\
Clean Re-prefill & 60.25 & 60.25 \\
\bottomrule
\end{tabular}
\end{table}

At 5\%, packing yields 1.53$\times$ faster refresh and 1.33$\times$ faster recovery than interval-wise execution. Two inputs select two intervals; their refresh times fall from 65.65/65.37 to 32.14/32.10\,ms, approximately 2.04$\times$. The other supports occupy one interval. Across these short inputs, packed 5\% recovery is 1.19$\times$ faster than Re-prefill and 0.96$\times$ as fast as Replay. This illustrates both the benefit of amortizing multiple traversals and the fixed costs that remain when recomputation is already small. The probes measure synchronized cache recovery only; they do not generate or score answers and do not alter the main task-quality or complete-answer latency results.

\subsection{Numerical validation}
\label{app:kernel-numerics}

Packing preserves the mathematical recurrence but not low-precision execution order. On the six refresh configurations, both packed SDPA and custom attention preserve the input and all unselected K/V bits, and match the sequential path's two-token query argmaxes. For fragmented supports, packed SDPA's query logit relative $L_2$ differences range from 0.67\% to 1.89\%. Custom selected-K/V maximum absolute differences range from 0.5 to 0.875; the largest layerwise relative $L_2$ difference is 1.67\%. Query-logit relative $L_2$ differences are 0.54--0.97\%. These are numerical probes, not task-quality certification.

The indexed attention unit tests use an explicit FP64 oracle with fixed BF16 tolerances $\mathrm{atol}=0.002$, $\mathrm{rtol}=0.02$ (FP16: 0.00025 and 0.003), covering causal gaps, tails, GQA, strides, metadata changes, and non-default streams. With FP32 probabilities retained, the custom kernel passes this local test on operands from all 36 layers of the 2K checkpoint run; the largest error/tolerance ratio is 0.190. The full H100 regression suite passes 830 tests; 81 GPU memory-sanitizer checks report no memory errors. These counts are software tests, not additional task samples.

A separate complete-model control promotes identical BF16 weights and buffers, and the shifted starting cache, to FP32 with PyTorch TF32 disabled. Its engineering budget requires each selected-K/V and query-logit max/RMS error to remain within twice the native low-precision packed-SDPA error. An earlier strict pointwise integration check had failed, and this diagnostic budget was fixed before the subsequent checkpoint controls; it is not an accuracy guarantee. At 8K/$R=16$, the custom path passes, with maximum ratio 1.617. At 2K/$R=16$, zero-based layer 18 V maximum error is 0.21346 versus the native path's 0.09965: a ratio of 2.142 exceeds the fixed budget. RMS and final-logit checks pass, but this failure remains. Layer-local accuracy does not establish identical accumulated trajectories. Accordingly, custom attention remains an explicit experimental option; packed SDPA is the default. The task-quality results elsewhere in this paper were not rerun on either packed backend.

\section{Mixed-Cache Error under Sparse Refresh}
\label{app:mixed_cache_error}

Sparse refresh recomputes selected positions while retaining shifted K/V states at unselected positions. Hence, even a refreshed position can attend to stale causal predecessors. We formalize this residual error for a single attention head.

Consider a refreshed position $i$ under repair support $A$. Let
\[
o_i^A = \sum_{j\le i}\alpha_{ij}^A v_j^A,
\qquad
\alpha_i^A=\operatorname{softmax}(z_i^A),
\]
where
\[
z_{ij}^A
=
\frac{\langle q_i^A,k_j^A\rangle}{\sqrt{d_h}}.
\]
Let the superscript $\star$ denote the corresponding quantities under full-suffix \textsc{Replay}, and define $\Delta q_i=q_i^A-q_i^\star$, $\Delta k_j=k_j^A-k_j^\star$, and $\Delta v_j=v_j^A-v_j^\star$.

\begin{proposition}[Residual mixed-cache error]
\label{prop:mixed_cache_error}
Assume $\|v_j^\star\|_2\le M_V$ for all $j\le i$. Then
\begin{align}
\|o_i^A-o_i^\star\|_2
\le\;&
\sum_{j\le i}\alpha_{ij}^A\|\Delta v_j\|_2
\nonumber\\
&+
2M_V\|z_i^A-z_i^\star\|_\infty .
\label{eq:mixed_attention_bound}
\end{align}
Moreover,
\begin{equation}
\begin{aligned}
&\|z_i^A-z_i^\star\|_\infty\\
&\quad\le \frac{1}{\sqrt{d_h}}\Bigl(
K_\star\|\Delta q_i\|_2\\
&\qquad\quad+
\bigl(Q_\star+\|\Delta q_i\|_2\bigr)
\max_{j\le i}\|\Delta k_j\|_2\Bigr),
\end{aligned}
\label{eq:logit_perturbation_bound}
\end{equation}
where $Q_\star=\|q_i^\star\|_2$ and $K_\star=\max_{j\le i}\|k_j^\star\|_2$.
\end{proposition}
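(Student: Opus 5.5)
We split the output error by adding and subtracting $\sum_j \alpha^A_{ij} v^\star_j$:
\[
o_i^A-o_i^\star=\sum_{j\le i}\alpha^A_{ij}\,\Delta v_j+\sum_{j\le i}\bigl(\alpha^A_{ij}-\alpha^\star_{ij}\bigr)v^\star_j .
\]
The triangle inequality bounds the first term by $\sum_j\alpha^A_{ij}\|\Delta v_j\|_2$, which is exactly the first summand of \eqref{eq:mixed_attention_bound}. The second term is at most $M_V\|\alpha^A_i-\alpha^\star_i\|_1$, using the assumption $\|v^\star_j\|_2\le M_V$. It therefore suffices to show
\[
\|\operatorname{softmax}(z)-\operatorname{softmax}(z')\|_1\le 2\|z-z'\|_\infty .
\]

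This softmax Lipschitz estimate is the one step needing care. We prove it by interpolation. Set $z(t)=z'+t(z-z')$ and $p(t)=\operatorname{softmax}(z(t))$, and write $\delta=z-z'$. The Jacobian of softmax is $\operatorname{diag}(p)-pp^\top$, so
\[
\dot p_j=p_j\bigl(\delta_j-\textstyle\sum_k p_k\delta_k\bigr).
\]
Hence
\[
\|\dot p\|_1=\sum_j p_j\bigl|\delta_j-\bar\delta\bigr|,\qquad \bar\delta=\textstyle\sum_k p_k\delta_k .
\]
Each $|\delta_j-\bar\delta|\le 2\|\delta\|_\infty$, giving $\|\dot p\|_1\le 2\|\delta\|_\infty$. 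Integrating over $t\in[0,1]$ yields the claim. A finer centering argument would improve the constant to $1$, but the stated factor $2$ does not need it. Combining the two terms gives \eqref{eq:mixed_attention_bound}.

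For \eqref{eq:logit_perturbation_bound}, fix $j\le i$ and expand the bilinear difference:
\[
\langle q^A_i,k^A_j\rangle-\langle q^\star_i,k^\star_j\rangle=\langle \Delta q_i,k^\star_j\rangle+\langle q^A_i,\Delta k_j\rangle .
\]
Cauchy--Schwarz bounds the first piece by $K_\star\|\Delta q_i\|_2$. For the second piece, the triangle inequality gives $\|q^A_i\|_2\le Q_\star+\|\Delta q_i\|_2$, so it is at most $(Q_\star+\|\Delta q_i\|_2)\max_{j\le i}\|\Delta k_j\|_2$. Dividing by $\sqrt{d_h}$ and taking the maximum over $j$ gives the $\ell_\infty$ logit bound.

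The argument implicitly requires both attention distributions to range over the same causal index set $\{j\le i\}$. This holds because sparse refresh keeps the full cache length and the causal mask, so $z_i^A$ and $z_i^\star$ are vectors of the same length.
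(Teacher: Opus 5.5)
Your proposal is correct and follows the paper's proof: the same add-and-subtract of $\sum_{j\le i}\alpha^A_{ij}v^\star_j$, the same $\ell_\infty\to\ell_1$ softmax Lipschitz step with constant $2$, and the same Cauchy--Schwarz bound on the logits. The differences are minor. You prove the softmax Lipschitz bound via the Jacobian interpolation, where the paper simply invokes it. Your two-term bilinear expansion with $\|q_i^A\|_2\le Q_\star+\|\Delta q_i\|_2$ is the paper's three-term expansion $\langle\Delta q_i,k_j^\star\rangle+\langle q_i^\star,\Delta k_j\rangle+\langle\Delta q_i,\Delta k_j\rangle$ regrouped, and it yields the identical bound.
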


\begin{proof}
Adding and subtracting $\sum_{j\le i}\alpha_{ij}^A v_j^\star$ gives
\begin{align}
\|o_i^A-o_i^\star\|_2
\le\;&
\sum_{j\le i}\alpha_{ij}^A\|\Delta v_j\|_2
\nonumber\\
&+
M_V\|\alpha_i^A-\alpha_i^\star\|_1.
\end{align}
Using the $2$-Lipschitz bound of softmax from $\ell_\infty$ to $\ell_1$ yields Eq.~\eqref{eq:mixed_attention_bound}.

For each $j\le i$,
\[
\begin{aligned}
&\langle q_i^A,k_j^A\rangle-\langle q_i^\star,k_j^\star\rangle\\
&\quad=\langle\Delta q_i,k_j^\star\rangle
+\langle q_i^\star,\Delta k_j\rangle\\
&\qquad+\langle\Delta q_i,\Delta k_j\rangle.
\end{aligned}
\]
Cauchy--Schwarz followed by maximization over $j\le i$ gives Eq.~\eqref{eq:logit_perturbation_bound}.
\end{proof}

\paragraph{Interpretation.}
Under the fixed-attention approximation $\alpha_i^A=\alpha_i^\star$, Proposition~\ref{prop:mixed_cache_error} reduces to
\begin{equation}
\|o_i^A-o_i^\star\|_2
\le
\sum_{j\le i}
\alpha_{ij}^\star\|\Delta v_j\|_2.
\label{eq:fixed_attention_bound}
\end{equation}
Thus, refreshing position $i$ alone does not remove errors carried by earlier stale states in its causal context. Refreshing influential predecessors can reduce this residual term when their recomputation reduces their deviation from \textsc{Replay}. The bound does not imply that semantic boundaries are optimal, nor that large attention mass necessarily produces large output error; SUR uses semantic units as a lightweight structural heuristic rather than computing a causal dependency closure.

\end{document}